\definecolor{lightblue}{RGB}{230,240,255}
\definecolor{lightpurple}{RGB}{240,230,255}
\definecolor{deeppurple}{RGB}{180,100,200}
\theoremstyle{plain}
\newtheorem{theorem}{Theorem}[section]
\newtheorem{proposition}[theorem]{Proposition}
\newtheorem{lemma}[theorem]{Lemma}
\theoremstyle{definition}
\theoremstyle{remark}
\icmltitlerunning{Smooth Operator: Smooth Verifiable Reward Activates Spatial Reasoning Ability of Vision-Language Model}
\begin{document}

\twocolumn[
  \icmltitle{Smooth Operator: Smooth Verifiable Reward Activates Spatial Reasoning Ability of Vision-Language Model}
  \begin{center}
    \vspace{-10pt} % 稍微收缩标题和作者之间的距离
    {\textbf{
      Siwen Jiao$^{1,2}$, 
      Tianxiong Lv$^{1\dagger\S}$,
      Kangan Qian$^{3}$,
      Chenxu Zhao$^{1}$, 
      Xiuyuan Zhu$^{1}$, 
      Tianlun Li$^{1}$, \\ 
      Xiaolong Cheng$^{1}$, 
      Jinyu Li$^{1}$, 
      Zhihao Liao$^{1}$, 
      Yang Cai$^{1}$ \\
    }}
    \vspace{6pt} % 作者与机构之间的间距
    {\itshape
      $^{1}$Amap, Alibaba Group \\
      $^{2}$National University of Singapore  \quad
      $^{3}$Tsinghua University
    }\\
    \vspace{6pt} % 作者与机构之间的间距
    \printAffiliationsAndNotice{}

    % 建议放上主要联系人的邮箱，增加专业感
  \end{center}
  %\icmlcorrespondingauthor{Siwen Jiao}{jiaosiwen.jsw@alibaba-inc.com}
  \vskip 0.3in
]

\begingroup
\renewcommand{\thefootnote}{\fnsymbol{footnote}}
\footnotetext[2]{Project Leader}
\footnotetext[4]{Corresponding Author}
\endgroup

% this must go after the closing bracket ] following \twocolumn[ ...

% This command actually creates the footnote in the first column listing the
% affiliations and the copyright notice. The command takes one argument, which
% is text to display at the start of the footnote. The \icmlEqualContribution
% command is standard text for equal contribution. Remove it (just {}) if you
% do not need this facility.

% Use ONE of the following lines. DO NOT remove the command.
% If you have no special notice, KEEP empty braces:
%\printAffiliationsAndNotice{}  % no special notice (required even if empty)
% Or, if applicable, use the standard equal contribution text:
% \printAffiliationsAndNotice{\icmlEqualContribution}

\begin{abstract}

Vision-Language Models (VLMs) face a critical bottleneck in achieving precise numerical prediction for 3D scene understanding. Traditional reinforcement learning (RL) approaches, primarily based on relative ranking, often suffer from severe reward sparsity and gradient instability, failing to effectively exploit the verifiable signals provided by 3D physical constraints. Notably, in standard GRPO frameworks, relative normalization causes “near-miss” samples (characterized by small but non-zero errors) to suffer from advantage collapse. This leads to a severe data utilization bottleneck where valuable boundary samples are discarded during optimization. To address this, we introduce the Smooth Numerical Reward Activation (SNRA) operator and the Absolute-Preserving GRPO (AP-GRPO) framework. SNRA employs a dynamically parameterized Sigmoid function to transform raw feedback into a dense, continuous reward continuum. Concurrently, AP-GRPO integrates absolute scalar gradients to mitigate the numerical information loss inherent in conventional relative-ranking mechanisms. By leveraging this approach, we constructed Numerical3D-50k, a dataset comprising 50,000 verifiable 3D subtasks. Empirical results indicate that AP-GRPO achieves performance parity with large-scale supervised methods while maintaining higher data efficiency, effectively activating latent 3D reasoning in VLMs without requiring architectural modifications.

\end{abstract}
\section{Introduction}

Vision-Language Models (VLMs) excel in 2D visual understanding, often rivaling or exceeding human accuracy across diverse benchmarks. However, in genuine 3D scene understanding, these models reveal a critical weakness: they struggle with precise numerical estimation and geometric reasoning, such as depth perception, inter-object distance calculation, and object sizing \cite{chen2024spatialvlm, yang2024depth, liu2025general, yang2025thinking, cheng2024spatialrgpt}. This limitation stems from a reliance on 2D visual encoders that lack explicit geometric priors and depend instead on statistical patterns from planar images.

\begin{figure}
    \centering
    \includegraphics[width=1\linewidth]{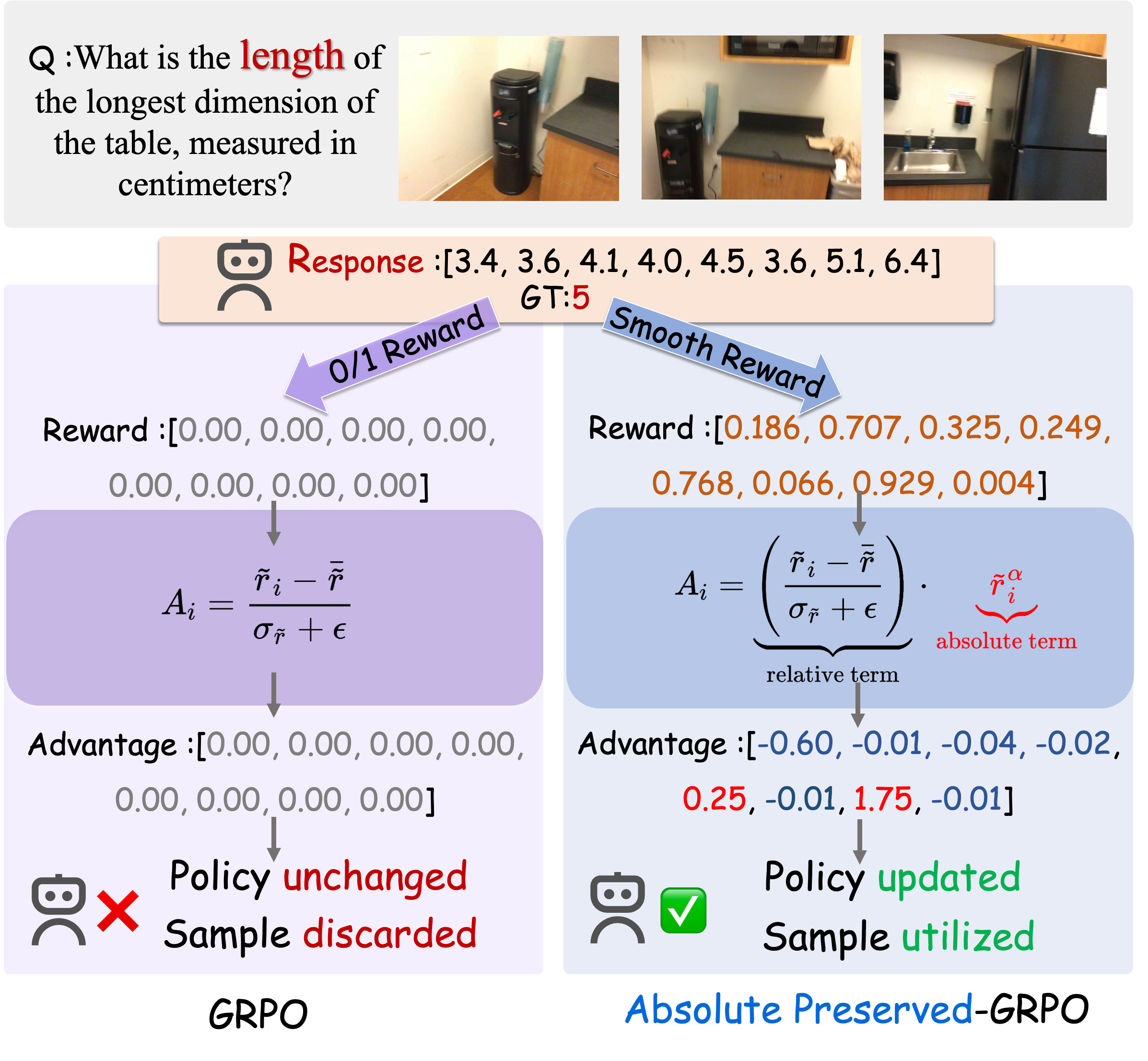}
    \caption{Standard GRPO vs. AP-GRPO(Ours). Standard GRPO (left) assigns binary rewards, yielding zero advantages for near-correct responses and discarding all samples, resulting in no policy update. In contrast, AP-GRPO (right) preserves absolute precision through SNRA smoothing, producing dense rewards and non-zero advantages modulated by the absolute term, thereby enabling effective policy updates and high sample utilization for precise numerical perception.}
    \label{fig:intro}
\end{figure}
Existing approaches to enhancing 3D capabilities in vision-language models (VLMs) fall into two main categories. The first involves feature augmentation, which incorporates external geometric representations (e.g., depth maps, point clouds, or specialized latent features from encoders like VGGT) as additional inputs or supervision \cite{wang2025vggt, fan2025vlm, cheng2024spatialrgpt, zhu2025llava}. While effective, these methods introduce substantial computational overhead, strong reliance on auxiliary 3D modalities or pre-trained geometric models, and potential disruption of the original VLM alignment.
The second strategy relies on large-scale, diverse training datasets to implicitly acquire 3D geometric understanding through extensive supervised or self-supervised training \cite{chen2024spatialvlm, jia2024sceneverse, daxberger2025mm, cheng2024spatialrgpt}. However, such approaches demand enormous data volumes for robust generalization and incur prohibitive training costs, limiting their practicality in resource-constrained settings.

In this work, we argue that verifiable numerical signals grounded in physical laws offer a promising yet underexplored pathway for bridging the gap between 2D perception and 3D understanding. However, effectively leveraging such signals via reinforcement learning (RL) faces several challenges. First, conventional preference optimization frameworks \cite{shao2024deepseekmath} rely primarily on relative comparisons rather than absolute, verifiable scalar feedback, compressing fine-grained numerical information into coarse preference signals. Second, traditional RL often adopts near-binary $0$--$1$ rewards, resulting in extremely sparse supervision and unstable gradients.

These limitations are particularly evident in group-relative methods such as Group Relative Policy Optimization (GRPO) \cite{shao2024deepseekmath}. Under sparse rewards, samples with small but non-zero numerical errors may yield near-zero advantages after group-wise normalization, causing many ``near-miss'' samples to contribute little to policy updates, especially during early training.To address these issues, we introduce two complementary components: the Smooth Numerical Reward Activation Operator (SNRA) and Absolute-Preserving GRPO (AP-GRPO). SNRA is a dynamically parameterized sigmoid-based reward transformation that maps raw verifiable signals, including unbounded squared errors and discrete graded scores, into a dense reward space within $[0,1]$. This yields smoother and more informative gradients in early optimization while progressively enforcing stricter physical tolerances as training proceeds.Building on this transformed reward signal, we propose Absolute-Preserving GRPO (AP-GRPO), a modification of standard GRPO tailored for physics-grounded RL. Unlike pure group-relative normalization, which compresses absolute reward scale information, AP-GRPO introduces an auxiliary absolute reward component aligned with task-specific physical tolerances. This component serves as a corrective reference that anchors group-normalized advantages to a physically meaningful scale, alleviating the loss of scalar gradient information while retaining GRPO’s variance reduction benefits.

Through the combined effect of SNRA and AP-GRPO, previously underutilized boundary samples with small numerical errors are reactivated, improving data efficiency. To fully leverage this, we construct Numerical3D-50k, a compact dataset of roughly 50k samples derived from complex 3D spatial queries with verifiable numerical (e.g., depth, distance, size) and relational (e.g., direction, spatial ordering) subtasks, and train our models on it. Notably, our method preserves the standard VLM architecture without adding encoders, modalities, or structural modifications.

The main contributions of this work are summarized as follows:
\begin{itemize}
    \item \textbf{Numerical3D-50k Dataset.} A compact dataset of approximately 50k samples derived from existing 3D benchmarks, decomposed into verifiable numerical and discrete relational subtasks with deterministic, physics-grounded supervision.
    \item \textbf{SNRA and AP-GRPO.} A dynamic sigmoid-based reward activation operator for densifying sparse physical signals, and a modified GRPO framework that partially preserves absolute reward information through a hybrid advantage formulation while maintaining GRPO’s variance control properties \cite{shao2024deepseekmath}.
    \item \textbf{Empirical Evidence on Numerical Perception.} Experiments demonstrate improved data efficiency by enabling effective learning from near-miss samples, achieving competitive performance with substantially larger-scale training and feature-augmentation baselines using only 50k samples, without architectural modifications.
\end{itemize}

\section{Related Works}
\subsection{Visual Spatial Understanding and Reasoning}
Despite strong progress of large vision-language models (LVLMs) on many visual tasks~\cite{liu2024llavaonevision, bai2025qwen2, chen2024internvl, wang2024qwen2, chen2024spatialvlm}, multiple benchmarks~\cite{yang2025thinking, ray2024sat, cheng2024spatialrgpt} show that spatial understanding and reasoning remain challenging. Prior work improves spatial capability from two complementary angles, namely spatial understanding (perceiving geometry, depth, and relations) and spatial reasoning (multi-step inference grounded in spatial cues)~\cite{qian2025agentthink, qian2025lego, qian2025priormotion, qian2025fasionad++, qian2024fasionad}. On the understanding side, Semantic Abstraction~\cite{ha2022semantic} laid an early foundation for open-world 3D scene understanding with 2D VLMs; subsequent efforts include SpatialVLM~\cite{chen2024spatialvlm} for spatial VQA via expert-constructed data, SpatialRGPT~\cite{cheng2024spatialrgpt} for extending RGB to RGB-D using 3D scene graphs, SAT~\cite{ray2024sat} for simulator-based data synthesis, and SpatialBot~\cite{cai2025spatialbot} for tool-assisted depth estimation, while later works~\cite{liu2025general, yang2025thinking, yang2025mmsi, zhu2025llava} further scale spatial perception with larger and more comprehensive datasets. In parallel, reasoning-oriented methods strengthen inference: MVoT~\cite{li2025imagine} integrates multimodal representations into reasoning traces; SpaceR~\cite{ouyang2025spacer} and MindCube~\cite{yin2025spatial} introduce textual cognitive maps and further improve performance with reinforcement learning; SpatialReasoner~\cite{ma2025spatialreasoner} predicts 3D locations and poses as intermediate outputs; and ViLaSR~\cite{wu2025reinforcing} incorporates visual tools and prompting. Recent RL frameworks for 3D reasoning include MetaSpatial~\cite{pan2025metaspatial}, 3D-R1~\cite{huang20253d}, Spatial-SSRL~\cite{liu2025spatial}, and PRISM~\cite{sun2025prism}. Unlike prior studies that rely on relative preference ranking or massive self-supervision, our approach leverages verifiable numerical signals and absolute-preserving optimization to bridge the perceptual gap.

\subsection{VLM Reinforcement Learning with Verifiable Rewards}

Large Vision-Language Models (LVLMs) traditionally scale via architectural refinements and instruction tuning \cite{liu2024llavaonevision, bai2025qwen2}. Inspired by DeepSeek-R1 \cite{guo2025deepseek}, post-training Reinforcement Learning (RL) has recently emerged as a key paradigm to unlock complex reasoning. Within this trend, Reinforcement Learning with Verifiable Rewards (RLVR) substitutes noisy neural reward models with objective, checkable criteria—such as rule-based validation and programmatic evaluators—thereby mitigating reward ambiguity and enhancing stability \cite{thawakar2025llamavo1, pan2025medvlmr1}. Recent frontiers (2025-2026) have evolved from simple outcome verification to process-aware and perception-grounded mechanisms. For instance, Perception-R1 \cite{xiao2025perceptionr1} and Vision-SR1 \cite{li2025visionsr1} introduce explicit perception rewards to penalize hallucinations and incentivize intermediate visual grounding. Concurrently, StructVRM \cite{zhang2025structvrm} and Argos \cite{tan2025argos} employ agentic verifiers to provide sub-question-level feedback, enabling nuanced partial credit over binary signals. Despite these advances, continuous numerical reasoning—crucial for 3D spatial tasks—remains a challenge. Standard relative-ranking algorithms like GRPO \cite{guo2025deepseek, he2025entropygrpo} often treat "near-miss" samples identically to failures, leading to reward sparsity and advantage collapse. Our work addresses these limitations by proposing Smooth Numerical Reward Activation (SNRA) and Absolute-Preserving GRPO (AP-GRPO). By transitioning from discrete logic-checking to a dense, continuous reward continuum, our framework effectively activates the latent spatial reasoning and precise numerical prediction capabilities of VLMs in verifiable 3D environments.

% \begin{figure*}[ht]
%     \centering
%     \includegraphics[width=0.85\linewidth]{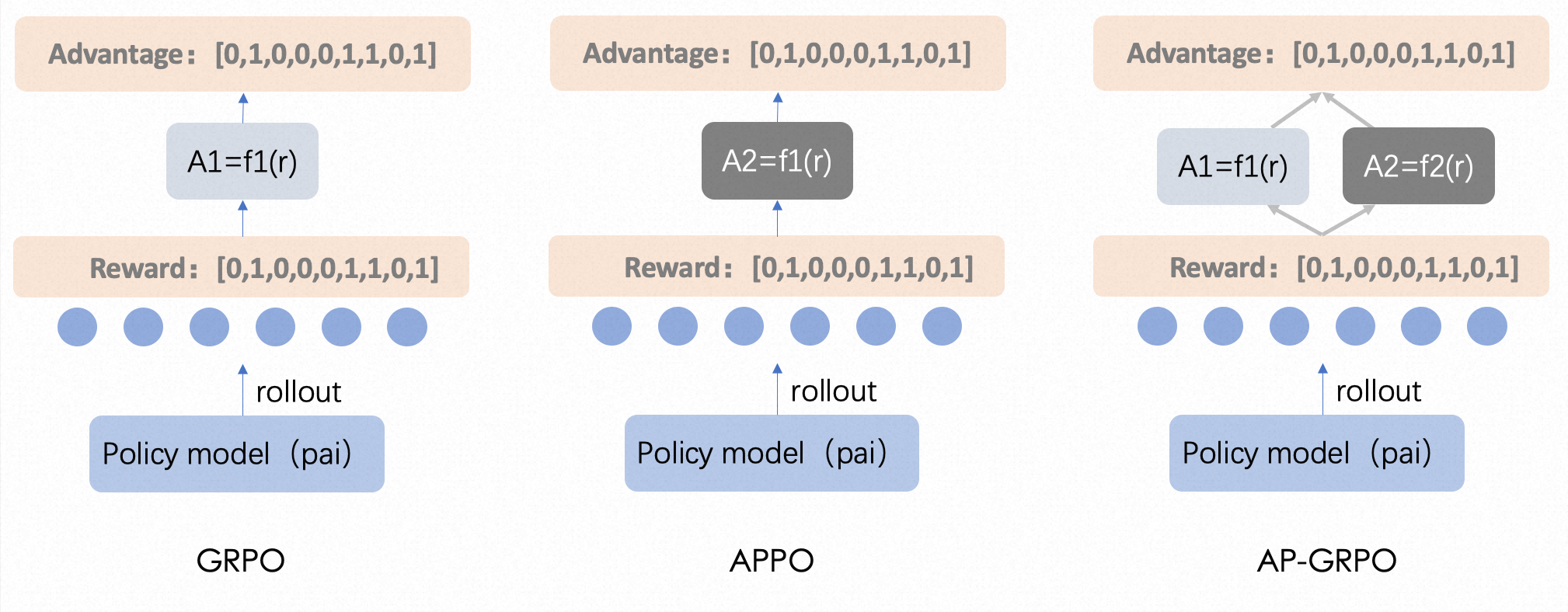}
%     \caption{Overview of the Absolute-Preserving GRPO (AP-GRPO) framework. For each training instance, \(G\) response trajectories are sampled and evaluated via task-specific verifiers. Rewards are transformed by SNRA into bounded signals \(\tilde{r}_i\), followed by hybrid advantage computation that preserves the dominant absolute component while incorporating lightweight relative normalization for stability.}
%     \label{fig:ap-grpo}
% \end{figure*}
\section{Method}

\begin{figure*}[ht!]
    \centering
    \includegraphics[width=1.0\linewidth]{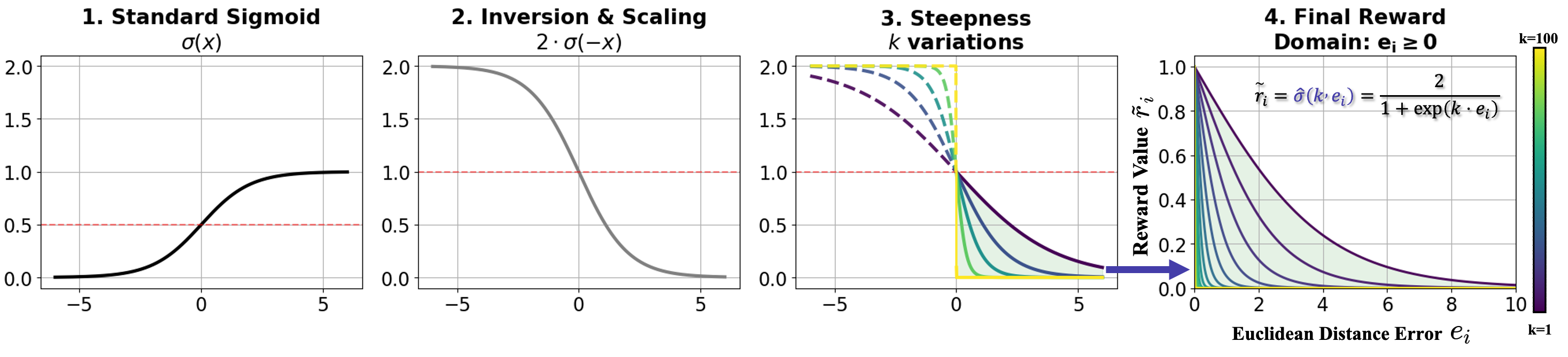}
    \caption{Derivation and mechanism of the SNRA operator. The roadmap depicts the transformation from a standard sigmoid $\sigma(x)$ (Stage 1) to the final reward mapping (Stage 4) via symmetry inversion and amplitude scaling. This formulation anchors the operator at $f(0)=1$ for perfect predictions. The sharpness $k$ acts as a dynamic soft-threshold: lower $k$ provides smooth gradients to encourage exploration, while higher $k$ enforces strict numerical compliance for precision convergence.}
    \label{fig:snra}
\end{figure*}

\subsection{3D Perceptual Subtasks and Numerical3D-50k Dataset}

We decompose 3D scene understanding into two complementary subtasks, each leveraging distinct physical ground-truth signals, and train our method on the curated \textbf{Numerical3D-50k} dataset.

Each sample includes a visual input $x$, a natural-language query $q$, a subtask identifier $\tau \in \mathcal{T}_{\mathrm{num}} \cup \mathcal{T}_{\mathrm{disc}}$, and verifiable geometric annotations. To ensure a unified optimization landscape during reinforcement learning, we map the feedback from both subtask types into a generalized error space $\mathcal{E}$, where $e_i \in [0, +\infty)$ represents the discrepancy between the prediction and the ground truth.

The subtasks are categorized as follows:

(i) \textbf{Numerical subtasks} ($\tau \in \mathcal{T}_{\mathrm{num}}$), requiring precise scalar outputs (e.g., depth, distance, size). The raw error is calculated as the squared physical distance $e_i = (\hat{y}_i - y)^2$. For invalid outputs (e.g., parsing failures), a task-specific maximum penalty $e_{\mathrm{max}}$ is assigned.

(ii) \textbf{Discrete relational subtasks} ($\tau \in \mathcal{T}_{\mathrm{disc}}$), involving categorical spatial reasoning (e.g., direction, ordering). Instead of directly assigning a reward score, we define a logical error $e_i$ derived from the task-specific verification $V_\tau(o_i) \in [0,1]$. Specifically, the error is mapped via $e_i = \Phi(1 - V_\tau(o_i))$, where $\Phi$ is a scaling function that ensures discrete failures (score 0) correspond to the high-penalty region of the SNRA operator, while partial credits are mapped to corresponding gradient-sensitive zones.

Subtasks are algorithmically generated for each scene-query pair using deterministic verifiers $V_\tau$, yielding dense and interpretable supervision suitable for reinforcement learning without model modifications.

The \textbf{Numerical3D-50k} dataset is built by integrating high-quality open-source spatial reasoning benchmarks (MindCube \cite{wu2025reinforcing}, VLM-3R instruction data \cite{fan2025vlm}, VSI-590K \cite{cambrian2025cambrians}) and richly annotated 3D datasets (ScanNet~\cite{dai2017scannet}, ScanNet++~\cite{yeshwanth2023scannet++}, ARKitScenes~\cite{dehghan2021arkitscenes}, SR-91k~\cite{cheng2024spatialrgpt}), which support automated generation of precise metric-scale annotations via geometric projection and template synthesis. From the resulting question-answer pairs, we carefully select 50,000 high-quality samples focused on numerical subtasks ($\mathcal{T}_{\mathrm{num}}$) and their verifiable geometric ground truth, providing balanced, precise, and metric-accurate supervision for fine-grained 3D numerical perception.

\subsection{Smooth Numerical Reward Activation Operator (SNRA)}In reinforcement learning for 3D spatial reasoning, directly optimizing raw verification signals poses significant numerical and structural challenges for autoregressive models. The primary verification signal for geometric alignment is typically defined as the squared Euclidean distance between the predicted coordinate $\hat{y}_i$ and the ground truth $y$:\begin{equation}e_i = |\hat{y}_i - y|^2\end{equation}This raw error $e_i$ is inherently unbounded and exhibits high variance across different training stages. In the early phases of exploration, large deviations can produce excessive gradient spikes; conversely, as the model approaches convergence, the diminishing error magnitudes yield vanishingly small gradients, which are insufficient to drive precision adjustments.To address these limitations, we propose the Smooth Numerical Reward Activation (SNRA) operator. As illustrated in Figure \ref{fig:snra}, the operator is derived from a standard sigmoid base $\sigma(x) = (1+e^{-x})^{-1}$ through a systematic transformation. Specifically, we apply a symmetry transformation and amplitude scaling to construct $f(x) = 2 \cdot \sigma(-x) = 2/(1+e^x)$, which ensures the reward is monotonically decreasing and anchored at $f(0)=1$ for perfect predictions. By introducing a sharpness parameter $k$ and restricting the domain to the physical error space $e_i \ge 0$, we define the specialized reward activation function $\hat\sigma(k, e_i)$ as:\begin{equation}\hat\sigma(k, e_i) = \frac{2}{1 + \exp(k \cdot e_i)}, \quad k > 0, \ e_i \ge 0, \ \tilde{r}_i \in (0, 1]\end{equation}where $k$ denotes the sharpness parameter that scales the reward's sensitivity. A small $k$ provides smooth gradients to encourage exploration, while a large $k$ facilitates precision convergence by rapidly decaying the reward for minor errors.In our framework, the numerical reward component $\tilde{r}_i$ serves as the foundation for enforcing numerical precision across varied subtasks:\begin{equation}\tilde{r}_i =\begin{cases}\hat\sigma\left( k(t), e_i^{\mathrm{con}} \right), & \text{for continuous subtasks} \\\hat\sigma\left( k(t), e_i^{\mathrm{disc}} \right), & \text{for discrete subtasks}\end{cases}\end{equation}where $k(t)$ is the dynamic sharpness parameter. The total reward $R_i$ is then constructed as a composite signal:\begin{equation}R_i =(1-\lambda)\tilde{r}_i + \lambda \cdot r_{fmt}\end{equation}where $r_{fmt} \in \{0, 1\}$ denotes the binary format reward for structural compliance, and $\lambda$ is the balancing coefficient.

\begin{figure*}[h!]
    \centering
    \includegraphics[width=1\linewidth]{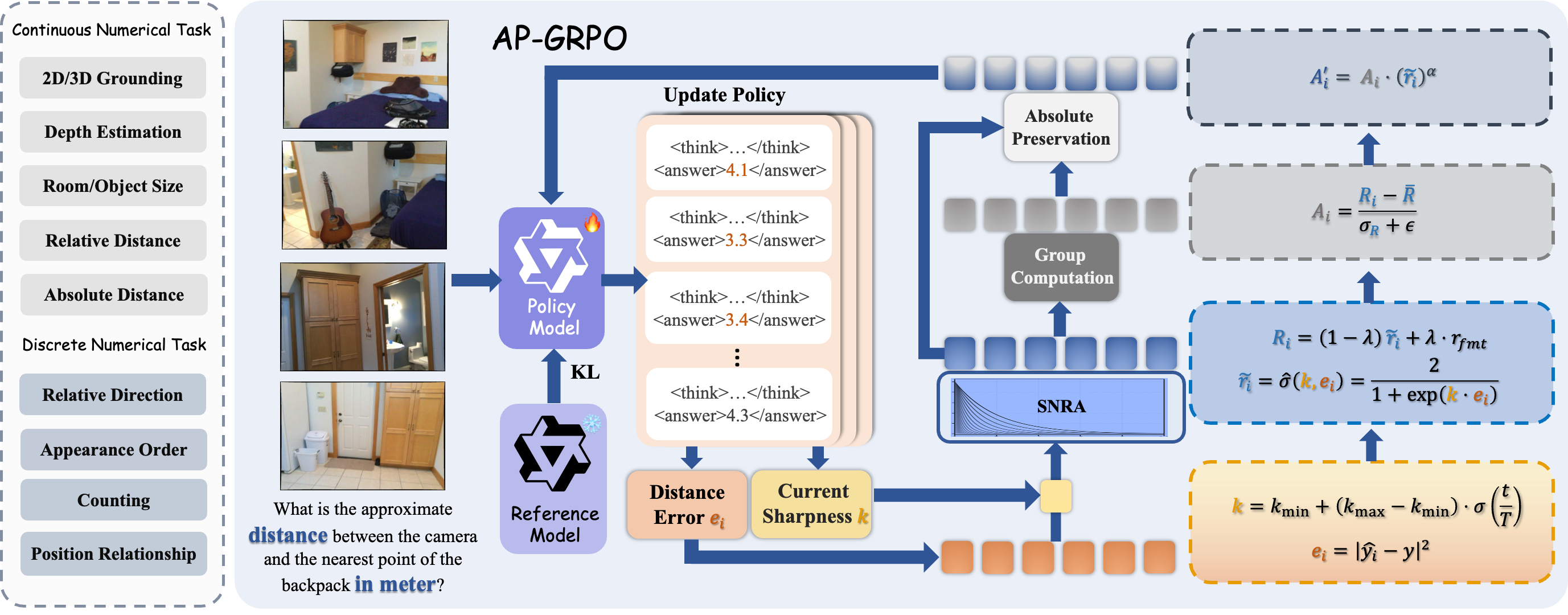}
    \caption{Overview of the AP-GRPO framework.
The method processes continuous (e.g., grounding, depth, size, distance) and discrete numerical tasks (e.g., direction, order, counting, position) via a policy model that generates reasoning and answers. Raw rewards are smoothed by SNRA into dense $R_i'$, then used to compute group-relative advantages modulated by absolute preservation $(R_i')^\alpha$, yielding $A_i'$. Policy updates employ a clipped objective with KL regularization for stable, precise 3D perception.}
    \label{fig:AP-GRPO}
\end{figure*}

\subsection{Dynamic Sharpness Scheduling}

The efficacy of SNRA is governed by the sharpness $k(t)$, which controls reward sensitivity to the error $e_i$. Instead of treating $k$ as a fixed hyperparameter, we introduce Dynamic Sharpness Scheduling, a non-linear gradient amplifier that gradually shifts optimization from global exploration to local precision refinement.

Early in training, a small $k(t)$ produces a diffused reward surface with broad gradient support, providing non-zero feedback even when predictions are far from the target and preventing stalls during global coordinate search. As training proceeds, we monotonically increase $k(t)$ to sharpen the reward peak around $e_i=0$, concentrating gradients on subtle numerical errors and promoting high-precision alignment.

We implement this coarse-to-fine curriculum with a centered, smoothed sigmoid schedule:
\begin{equation}
k(t) = k_{\min} + (k_{\max}-k_{\min}) \cdot \sigma\!\left(s\left(\frac{t}{T}-\tau\right)\right),
\end{equation}
where $t$ and $T$ are the current and total steps. $\tau \in (0,1)$ sets the transition center and $s>0$ controls its steepness. We use $\tau=0.5$ and $s=10$. Thus, $k(t)$ stays near $k_{\min}$ early, rises around $t \approx \tau T$, and approaches $k_{\max}$ late in training. In all experiments, $k_{\min}=1.0$ and $k_{\max}=100.0$.

Coupling sharpness to training progress induces \emph{reward hardening}. As $k \to \infty$, the smooth surrogate $\tilde{r}_i$ approaches a binary verification signal:
\begin{equation}
\tilde{r}_i \xrightarrow{k \to \infty} \mathbb{I}_{e_i} =
\begin{cases}
1, & \text{if } e_i = 0, \\
0, & \text{if } e_i > 0.
\end{cases}
\end{equation}
This lets the agent first learn global geometry under permissive rewards, then face increasingly strict precision requirements, balancing exploration stability with accurate reasoning.

\subsection{Absolute-Preserving GRPO (AP-GRPO)}

Standard Group Relative Policy Optimization (GRPO) estimates the advantage function through intra-group standardization:

\begin{equation}
    A_i = \frac{R_i - \bar{R}}{\sigma_{R} + \epsilon},
\end{equation}

where $\bar{R}$ and $\sigma_{R}$ are the mean and standard deviation of rewards within a group of $G$ samples generated from the same query. By enforcing $\sum A_i = 0$, this formulation ensures that the policy update depends only on the relative ranking of samples within the group, increasing the probability of those that outperform the local average.

While relative ranking is effective in high-quality regimes such as language tasks, it encounters severe limitations in 3D spatial reasoning. When all samples in a group receive extremely low or zero rewards (common in physically invalid predictions), the standardization process eliminates absolute scale information and assigns near-zero advantages to all samples. This renders many trajectories effectively wasted, as they contribute no meaningful gradient updates despite containing potentially recoverable directional signals.

To recover absolute numerical optimization signals and prevent the waste of low-reward but informative samples, we propose \textbf{Absolute-Preserving GRPO (AP-GRPO)}. We redefine the advantage function as the product of a relative ranking term and an absolute scaling term:

\begin{equation}
    A_i^\prime = \underbrace{\left( \frac{R_i - \bar{R}}{\sigma_{R} + \epsilon} \right)}_{\text{Relative term}} \cdot \underbrace{\tilde{r}_i^{\alpha}}_{\text{Absolute term}}, \quad \alpha \ge 1,
    \label{eq:apgrpo}
\end{equation}
where $\alpha$ is a hyperparameter controlling the strength of absolute reward modulation. The first term preserves intra-group relative ranking based on the total reward $R_i$, while the second term $\tilde{r}_i^{\alpha}$ scales the update magnitude by absolute accuracy: when $\tilde{r}_i$ is close to $1$, $\tilde{r}_i^{\alpha}\approx 1$, so $A_i^\prime$ behaves similarly to standard GRPO and can exploit fine-grained relative differences for precise optimization; when $\tilde{r}_i$ is small (including near-zero), $\tilde{r}_i^{\alpha}$ strongly suppresses the advantage magnitude, so even top-ranked samples in a group contribute little due to poor absolute accuracy. From an optimization perspective, $\tilde{r}_i^{\alpha}$ serves as a weighting factor that scales gradient magnitudes in proportion to each sample’s physical plausibility, concentrating updates on trajectories with meaningful accuracy while suppressing noisy ranking signals from groups dominated by inaccurate responses.

In our implementation, advantages are computed at the token level, with symmetric clipping applied to maintain numerical stability. Substituting the modulated advantage into the policy objective yields the following clipped surrogate loss:
\begin{equation}
\begin{aligned}
\mathcal{L}(\theta) ={}& \mathbb{E} \Biggl[ \frac{1}{G} \sum_{i=1}^{G} \min \Biggl(
    \rho_i \left( \frac{R_i - \bar{R}}{\sigma_{R} + \epsilon} \tilde{r}_i^{\alpha} \right), \\
& \quad \text{clip}(\rho_i, 1-\epsilon, 1+\epsilon) \left( \frac{R_i - \bar{R}}{\sigma_{R} + \epsilon} \tilde{r}_i^{\alpha} \right)
\Biggr) \Biggr] \\
& - \beta_{\text{KL}} \, D_{\text{KL}}(\pi_{\theta} \| \pi_{\text{ref}}),
\end{aligned}
\end{equation}
where $\rho_i = \frac{\pi_{\theta}(o_i | x, q)}{\pi_{\text{old}}(o_i | x, q)}$ denotes the probability ratio between the current and sampling policies. By incorporating absolute information from SNRA into the advantage computation, AP-GRPO recovers reliable gradient signals from low-reward samples, prevents wasteful updates on inaccurate trajectories, and achieves more stable convergence with superior metric accuracy on 3D perception tasks.

\section{Experiments}

\begin{table*}[ht!]
\centering
\caption{Comprehensive Evaluation of Spatial Intelligence. Table (a) evaluates fine-grained video-based spatial reasoning on VSI-Bench, while Table (b) summarizes performance across diverse 3D spatial reasoning and general multimodal benchmarks.}
\label{tab:comprehensive_spatial_results}

\begin{subtable}{\textwidth}\centering
\footnotesize
\caption{Evaluation Results on VSI-Bench with Data Scale and Training Phases}
\resizebox{\textwidth}{!}{%
\begin{tabularx}{\textwidth}{X  >{\centering\arraybackslash}p{0.60cm} >{\centering\arraybackslash}p{0.99cm} >{\centering\arraybackslash}p{0.85cm} *{8}{>{\centering\arraybackslash}p{0.80cm}}}
\toprule
\multirow{3}{*}{\textbf{Methods}} & \multirow{3}{*}{\textbf{\shortstack{Data\\Scale}}} & \multirow{3}{*}{\textbf{\shortstack{Training\\Phases}}} & \multirow{3}{*}{\textbf{Avg.}} & \multicolumn{4}{c}{\textbf{Numerical Question}} & \multicolumn{4}{c}{\textbf{Multiple-Choice Question}} \\
\cmidrule(lr){5-8} \cmidrule(lr){9-12}
& & & & \textbf{Obj. Cnt.} & \textbf{Abs. Dist.} & \textbf{Obj. Size} & \textbf{Room Size} & \textbf{Rel. Dist.} & \textbf{Rel. Dir.} & \textbf{Route Plan} & \textbf{Appr. Ord.} \\
\midrule
\multicolumn{12}{c}{Proprietary and Open-source General Models} \\
\midrule
GPT-4o \cite{hurst2024gpt}          & - & - & 34.0 & 46.2 & 5.3  & 43.8 & 38.2 & 37.0 & 41.3 & 31.5 & 28.5 \\
Gemini-1.5 Pro \cite{team2024gemini}  & - & - & 45.4 & 56.2 & 30.9 & 64.1 & 43.6 & 51.3 & 46.3 & 36.0 & 34.6 \\
LLaVA-OneVision-7B  & - & - & 32.4 & 47.7 & 20.2 & 47.4 & 12.3 & 42.5 & 35.2 & 29.4 & 24.4 \\
LLaVA-Video-7B      & - & - & 35.6 & 48.5 & 14.0 & 47.8 & 24.2 & 43.5 & 42.4 & 34.0 & 30.6 \\
Qwen2.5-VL-7B       & - & - & 32.7 & 34.5 & 19.4 & 47.6 & 40.8 & 32.8 & 24.5 & 32.5 & 29.4 \\
\midrule
% 使用浅灰色背景强调单阶段微调部分
\rowcolor[gray]{0.92}
\multicolumn{12}{c}{Open-source Models with \textbf{Standard Architecture} Finetuning} \\
\midrule
SAT-7B              & 175K  & 1 (S) & -    & -    & -    & -    & -    & 47.3 & 41.1 & 37.1 & 36.1 \\
SpatialLadder-3B  & 26K & 3 (S+R) & 45.7  & -    & -    & -    & - & -    & -    & -    & - \\
InternVL-Spatial-8B & 291K & 1 (S) & 52.3   & 68.7 & 40.9 & 63.1 & 54.3 & 47.7 & -    & 29.9 & 60.5 \\
SpaceR-7B           & 151K & 2 (S+R)  & 43.5 & 61.9 & 28.6 & 60.9 & 35.2 & 38.2 & 46.0 & 31.4 & 45.6 \\
ViLaSR-7B           & 81K  & 2 (S+R) & 45.4 & 63.5 & 34.4 & 60.6 & 30.9 & 48.9 & 45.2 & 30.4 & 49.2 \\
VST-3B-SFT          & 4.1M & 1 (S) & 57.9 & 69.3 & 45.4 & 71.8 & 62.4 & 59.0 & 46.0 & 38.7 & \textbf{70.2} \\
VST-7B-SFT          & 4.1M & 1 (S) & \textbf{60.6} & \textbf{72.0} & \textbf{44.4} & \textbf{74.3} & 68.3 & 59.7 & \textbf{55.8} & \textbf{44.9} & 65.2 \\
\textbf{SmoothOp-3B (Ours)}  & \textbf{50K}  & 1 (R) & 53.9 & 70.1 & 42.8 & 69.9 & 38.3 & 60.4 & 45.4 & 38.7 & 65.5 \\
\textbf{SmoothOp-7B (Ours)}  & \textbf{50K} & 1 (R) & \underline{60.0} & 71.1 & 41.4 & 72.2 & \textbf{70.3} & \textbf{61.7} & 55.1 & 42.8 & 65.4 \\
\bottomrule
\end{tabularx}%
}
\label{tab:vsi_bench_full}
\end{subtable}

\vspace{6pt}

\begin{subtable}{\textwidth}
\centering
\small  % 字体大小：清晰、可读，比 \footnotesize 大
\caption{Evaluation Results on Other Spatial Intelligence Benchmarks.}
\label{tab:performance_comparison}
\begin{tabularx}{\textwidth}{X >{\centering\arraybackslash}p{1.9cm} >{\centering\arraybackslash}p{1.9cm} c c c c }
\toprule
\textbf{Model} & \textbf{\makecell{Data\\Scale}} & \textbf{\makecell{Training\\Phases}} & \textbf{CV-Bench} & \textbf{MMSI} & \textbf{MindCube} & \textbf{BLINK} \\
\midrule
\multicolumn{7}{c}{Proprietary Models and Open-source General Models}\\
\midrule
Gemini-2.5-Pro\cite{comanici2025gemini} & -- & -- & 53.5 & 38.0 & 57.6 & 70.6 \\
GPT-4o & -- & -- & 76.0 & 30.3 & 53.4 & 65.9 \\
LLava-OneVision-7B & -- & -- & 61.9 & 31.0 & 34.7 &   48.2 \\
Qwen2.5-VL-3B & -- & -- & 71.4 & 28.6 & 37.6 & 47.6 \\
Qwen2.5-VL-7B & -- & -- & 75.4 & 26.8 & 36.0  & 56.4 \\
InternVL3-8B & -- & -- & 42.1 & 28.0 & 41.5 & 55.5 \\

\midrule
\rowcolor[gray]{0.92}
\multicolumn{7}{c}{Open-source Models with \textbf{Standard Architecture} Finetuning} \\
\midrule
SpatialLadder-3B & 26K & 3 (S+R) & 73.7 & 26.1 & \textbf{43.4} & 56.9 \\
SPAR-8B & 7M & 2 (S) & 80.7 & -- & -- &43.9 \\
SpaceR-7B & 151K &2 (S+R) & 74.8 & 20.1 & 37.9  & 55.4 \\
ViLaSR-7B & 81K & 2 (S+R) & 81.8 & 30.2 & 35.1 & 56.2 \\
VST-3B-SFT & 4.1M & 1 (S) & 84.4 & 30.2 & 35.9 & 59.1 \\
VST-7B-SFT & 4.1M & 1 (S) & 85.5 & 32.0 & 39.7 & 62.1 \\
\textbf{SmoothOp-3B (Ours)} & \textbf{50K} & 1 (R) & 
\textbf{84.9}\textcolor{blue!78!black}{(+18.9\%)} & 
\textbf{32.2}\textcolor{blue!78!black}{(+12.6\%)} & 
39.4\textcolor{blue!71!black}{(+4.8\%)} & 
\textbf{56.9}\textcolor{blue!78!black}{(+19.5\%)} \\
\textbf{SmoothOp-7B (Ours)} & \textbf{50K} & 1 (R) & 
\textbf{86.4}\textcolor{blue!78!black}{(+14.6\%)} & 
\textbf{33.7}\textcolor{blue!78!black}{(+25.7\%)} & 
39.6\textcolor{blue!71!black}{(+10.0\%)} & 
\textbf{62.6}\textcolor{blue!78!black}{(+11.0\%)}\\
\bottomrule
\multicolumn{7}{l}{\footnotesize}
\end{tabularx}
\end{subtable}
\end{table*}

\begin{table*}
\centering
\caption{\textbf{Roadmap of optimization mechanisms on VSI-Bench.} We evaluate the evolution of our framework across different reward designs and scheduling strategies. Convergence Steps denotes the training iterations required to reach 95\% accuracy; Advantage Variance represents the stability of the policy gradient signal.}
\label{tab:ablation_reward}
\small 
% 使用 tabularx，设置总宽度为 \textwidth
% l: 第一列左对齐；X: 第二列自动伸展填充；c: 后三列居中
\setlength{\tabcolsep}{15.7pt} 
\begin{tabularx}{\textwidth}{llccc}
\toprule
\textbf{ID} & \textbf{Optimization Mechanism} & \textbf{\makecell{Convergence\\Steps ($\downarrow$)}} & \textbf{\makecell{Advantage\\Variance ($\downarrow$)}} & \textbf{\makecell{Average\\Accuracy (\%)}} \\
\midrule
$S_0$ & SFT & N/A & N/A & 49.7 \\
$S_1$ & GRPO with Binary 0-1 Reward & 310 & 0.152 & 54.4 \\
$S_2$ & GRPO with SNRA (Fixed $k$) & 390 & 0.165 & 53.8 \\
$S_3$ & \textbf{AP-GRPO} with SNRA (Fixed $k$) & 250 & 0.148 & 55.4 \\
$S_4$ & \textbf{AP-GRPO} with SNRA (Linear Scheduling) & 150 & 0.142 & 58.9 \\
\rowcolor[gray]{0.9}
$S_5$ & \textbf{AP-GRPO} with \textbf{SNRA (Sigmoid Scheduling)} & \textbf{130} & \textbf{0.138} & \textbf{60.0} \\
\bottomrule
\end{tabularx}
\end{table*}

\subsection{Experimental Setup}

\noindent \textbf{Implementation Details.}We employ Qwen2.5-VL-3B and 7B \cite{wang2024qwen2} as our base vision-language models. Our training adopts a streamlined \textbf{one-stage reinforcement learning} paradigm using Absolute-Preserving GRPO (AP-GRPO) on the Numerical3D-50k dataset, bypassing the need for massive supervised pre-training or Chain-of-Thought (CoT) cold-start. For each query, we sample $G=8$ trajectories. We use the AdamW optimizer with a learning rate of $1\times 10^{-6}$ and weight decay of $0.01$. The KL divergence penalty is set to $\beta_{\mathrm{KL}}=0.02$. For the SNRA operator, we implement a sigmoid sharpness schedule with $k_{\tau,\min}=1.0$ $k_{\tau,\max}=100.0$ and
$\lambda=0.1$. The advantage clipping and absolute modulation coefficient are set to $c=1.5$ and $\alpha=1.0$, respectively. Training is conducted on 8$\times$ NVIDIA A100 GPUs for a single epoch. The original VLM architecture is fully preserved without any additional encoders or auxiliary input modalities.

\noindent \textbf{Evaluation Benchmarks.}We evaluate \textbf{SmoothOp} across five benchmarks spanning the spectrum from numerical precision to high-level geometric reasoning. These include \textbf{VSI-Bench}~\cite{cambrian2025cambrians} for video-based 3D layout reasoning, \textbf{CV-Bench}~\cite{tong2024cambrian1} for foundational 2D/3D perception, \textbf{MMSI-Bench}~\cite{yang2025mmsi} for multi-perspective integration, \textbf{MindCube}~\cite{yin2025spatialmentalmodelinglimited} for occluded scene reconstruction, and \textbf{BLINK}~\cite{fu2024blink} for ego-allocentric perspective-taking. This diverse suite comprehensively assesses the model's spatial reasoning and generalization capabilities.

\subsection{Main Results}

\noindent \textbf{Overall Performance.}
As shown in Table~\ref{tab:vsi_bench_full} and Table~\ref{tab:performance_comparison}, our method exhibits competitive performance across multiple spatial benchmarks compared to leading general-purpose models such as Qwen2.5-VL and LLaVA-OneVision. These results suggest that incorporating smooth, verifiable numerical rewards can provide more precise guidance for spatial perception than standard vision-language objectives.

\noindent \textbf{Effectiveness of Sparse Numerical Supervision.}
A notable observation is the comparison between our method and VST~\cite{yang2025visual} regarding supervision scale. While VST is trained on 4.5M supervised spatial samples, SmoothOp achieves comparable results—and slight improvements on CV-Bench (86.4\% vs. 85.5\%)—using \textbf{50K} targeted samples. This suggests that the SNRA operator and AP-GRPO effectively extract geometric information from numerical rewards, potentially reducing the reliance on exhaustive supervised pre-training for coordinate distribution learning.

\noindent \textbf{Generalization to 3D Tasks.}
Results on MindCube and BLINK suggest that late-stage numerical alignment via reward hardening improves broader 3D reasoning, boosting occluded reconstruction and perspective-taking. This indicates that precise numerical supervision promotes more robust 3D geometric representations beyond the training objective.

\subsection{Roadmap of Reward and Architecture Synergy}

To quantitatively analyze training dynamics, we define \textbf{Advantage Variance (Adv. Var.)} as the group-wise variance of advantage estimates $\sigma^2_A = \frac{1}{G} \sum_{i=1}^G (\hat{A}_i - \bar{A})^2$, which serves as a proxy for gradient stability. Correspondingly, \textbf{Convergence Steps (Conv.)} are quantified by $T_{conv} = \min \{ t \mid \text{Acc}_t \ge 0.95 \cdot \text{Acc}_{max} \}$, assessing the optimization efficiency. Starting from the SFT baseline ($S_0$, 49.7\%), Table~\ref{tab:ablation_reward} and Fig.~\ref{fig:placeholder} illustrate the framework's evolution through three strategic stages, clarifying the interaction between reward shaping and RL architecture.

\noindent \textbf{Stage 1: Signal Dilution in Vanilla GRPO ($S_1 \to S_2$).}

Directly applying the \textbf{SNRA} operator within standard GRPO ($S_2$) yields 53.8\% accuracy, a slight dip from the binary baseline ($S_1$, 54.4\%). As shown in Fig.~\ref{fig:placeholder}, $S_2$ exhibits the highest \textbf{Adv. Var.} (0.165) and a protracted \textbf{Conv.} period of 310 steps. This suggests an architectural limitation: vanilla GRPO's group normalization tends to "wash out" fine-grained numerical differences from SNRA. Consequently, absolute precision gains are partially discarded as noise while relative ranking dominates the gradient, leading to unstable oscillations.

\noindent \textbf{Stage 2: Restoring Absolute Signals via AP-GRPO ($S_2 \to S_3$).}

The introduction of \textbf{Absolute-Preserving (AP-GRPO)} ($S_3$) marks a significant improvement, elevating accuracy to 55.4\% and accelerating convergence by 2.6$\times$ (250 steps). $S_3$ escapes the oscillation zone early, validating that smooth numerical rewards require an explicit absolute reference to be effective. By ensuring that subtle coordinate refinements are directly reflected in policy gradients without being neutralized, this synergy stabilizes optimization (\textbf{Adv. Var.} 0.148) and enables the transition from coarse estimation to precise reasoning.

\noindent \textbf{Stage 3: Dynamic Scheduling as Late-stage Momentum ($S_3 \to S_5$).}

Finally, we observe that fixed-sharpness models ($S_3$) often hit a performance ceiling, as a static reward landscape provides diminishing discriminative signals when the model nears the target. In contrast, dynamic scheduling injects sustained optimization push. The \textbf{Sigmoid schedule ($S_5$, 60.0\%)} achieves the best results with the lowest variance (\textbf{0.138}). By transitioning from global coordinate searching to millimeter-level alignment, this non-linear curriculum helps break late-stage plateaus and provides the model with the necessary "upside potential" to reach its highest performance bound.

\begin{figure}[t!]
    \centering
    \includegraphics[width=1\linewidth]{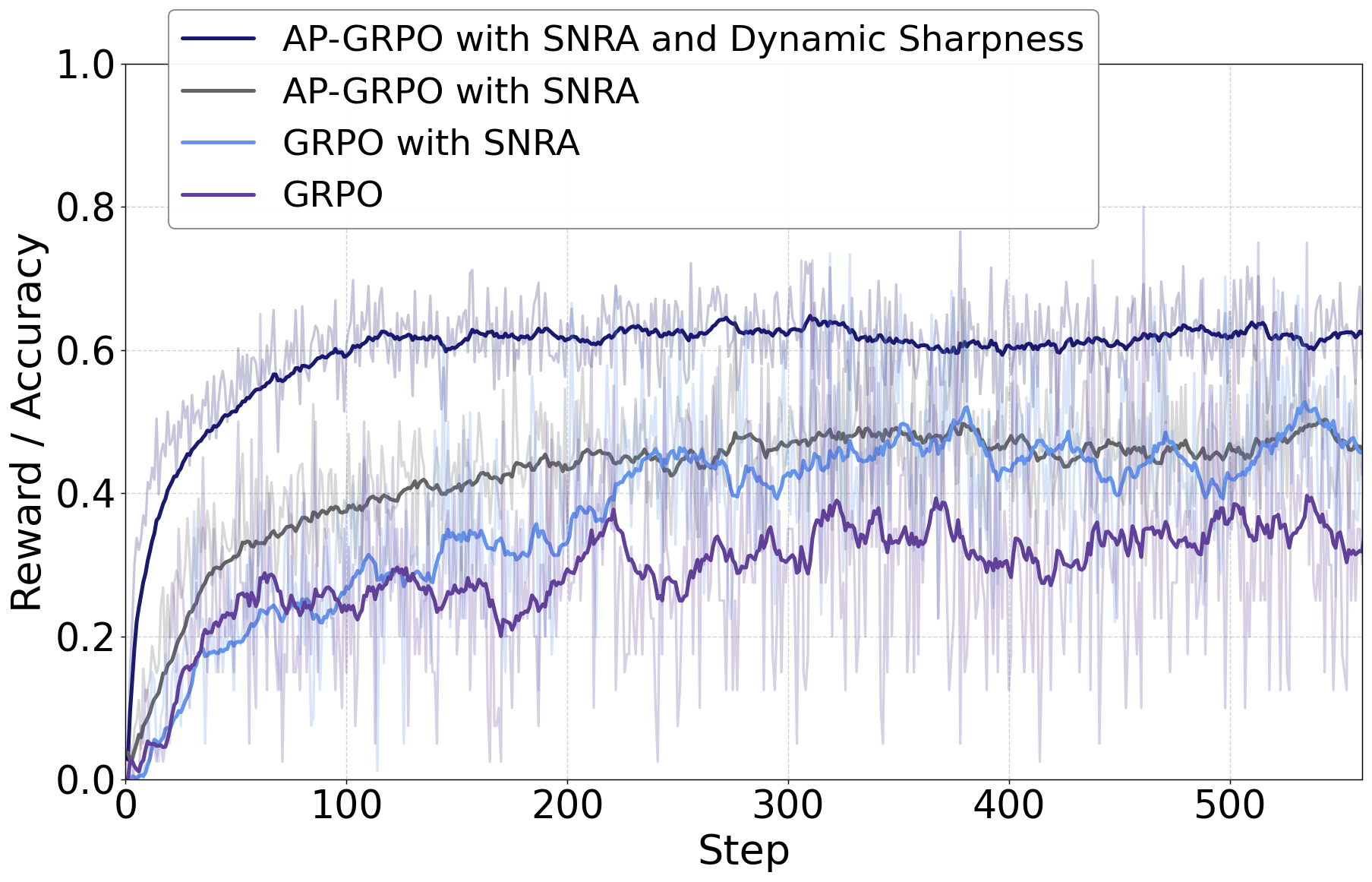}
    \caption{Component-level ablation study. The synergy of SNRA and Dynamic Sharpness allows AP-GRPO to surpass standard GRPO variants, validating the necessity of geometric priors and reward landscape refinement for precise 3D reasoning.}
    \label{fig:placeholder}
\end{figure}

\begin{table}[t!]
\centering
\caption{Comparison of advantage estimation strategies on VSI-Bench. }
\label{tab:ablation_grpo}
\footnotesize
\begin{tabular}{lcc}
\toprule
\textbf{Advantage Formulation} & \textbf{Sync. Type} & \textbf{Avg. Acc (\%)} \\
\midrule
Standard GRPO                  & Relative only      & 54.4 \\
Pure Absolute Scalar           & Absolute only      & 43.1 \\
\rowcolor[gray]{0.9}
\textbf{AP-GRPO (Ours)}        & \textbf{Rel. + Abs.} & \textbf{60.0} \\
\bottomrule
\end{tabular}
\end{table}

% --- Table 3: Data Efficiency Comparison ---
\begin{table}[t!]
\centering
\caption{Data efficiency and scaling comparison on VSI-Bench.}
\label{tab:data_efficiency}
\footnotesize
% \tabcolsep 控制列间距，设小一点防止溢出
\setlength{\tabcolsep}{4pt} 
\begin{tabularx}{\columnwidth}{l *{3}{>{\centering\arraybackslash}X}}
\toprule
& \multicolumn{3}{c}{\textbf{Acc. (\%) @ Training Samples}} \\
\cmidrule(lr){2-4}
\multirow{-2.5}{*}{\textbf{Method}} & \textbf{10k} & \textbf{20k} & \textbf{50k} \\
\midrule
Standard GRPO   & 44.9 & 48.6 & 54.4 \\
\rowcolor[gray]{0.9}
\textbf{AP-GRPO (Ours)} & 51.4 & 57.6 & \textbf{60.0} \\
\bottomrule
\end{tabularx}
\end{table}

\subsection{Ablation Studies}

We conduct ablations on the VSI-Bench validation set to assess component necessity and scaling behavior.

\textbf{Advantage Formulation: Relative vs. Absolute.} 
AP-GRPO outperforms standard GRPO and purely absolute variants by formulating the advantage as a multiplicative product of relative rank and absolute scalar (Table~\ref{tab:ablation_grpo}). While standard GRPO collapses precision gaps into coarse ranks and pure absolute approaches suffer from high gradient instability (43.1\% accuracy), AP-GRPO preserves reward magnitudes for fine-grained steering while maintaining group-wise stability. The 16.9\% gain over the absolute baseline confirms that this synergy is essential for robust 3D reasoning.

\textbf{Data Efficiency and Scaling.}
AP-GRPO is more sample-efficient and scales better than standard GRPO (Table~\ref{tab:data_efficiency}). With only 10k samples, it reaches 51.4\% accuracy, which nearly matches the baseline performance trained on 50k samples, corresponding to a \textbf{5$\times$ efficiency gain}. We attribute this to leveraging ``near-miss'' boundary samples that are often down-weighted or discarded by relative normalization. Moreover, AP-GRPO exhibits a steeper scaling curve, translating additional data into stronger geometric priors and a consistently larger gap over standard RL.

% \textbf{Dataset Scale-up and Task Complexity.} 
% We investigate the scalability of our method across two dimensions: data volume and task diversity (Table~\ref{tab:scale_up}). 
% \textit{(i) Data Scale:} Our model exhibits a strong scaling trend; notably, with only 50k samples, AP-GRPO achieves performance parity with standard GRPO trained on 10$\times$ more data, directly validating the \textbf{data efficiency} of our approach. 
% \textit{(ii) Task Diversity:} Scaling up from simple depth estimation to integrated 3D spatial queries (Numerical3D-50k full set) leads to emergent reasoning capabilities, suggesting that SNRA and AP-GRPO successfully translate increased task complexity into more robust geometric priors.

% --- Table 2: AP-GRPO vs Standard ---

%\input{sec/x.others}
\section{Conclusion}
We introduce a data-efficient RL framework for 3D spatial reasoning. Through the synergy of SNRA and AP-GRPO, we activate boundary gradients often wasted in standard relative optimization. Our approach achieves competitive performance on spatial intelligence benchmarks matching million-scale baselines using only 50k samples. This demonstrates that coupling verifiable physical signals with absolute-preserving RL can effectively instill robust geometric priors in VLMs without architectural changes.

% In the unusual situation where you want a paper to appear in the
% references without citing it in the main text, use \nocite
\nocite{langley00}

\bibliography{example_paper}
\bibliographystyle{icml2026}

%%%%%%%%%%%%%%%%%%%%%%%%%%%%%%%%%%%%%%%%%%%%%%%%%%%%%%%%%%%%%%%%%%%%%%%%%%%%%%%
%%%%%%%%%%%%%%%%%%%%%%%%%%%%%%%%%%%%%%%%%%%%%%%%%%%%%%%%%%%%%%%%%%%%%%%%%%%%%%%
% APPENDIX
%%%%%%%%%%%%%%%%%%%%%%%%%%%%%%%%%%%%%%%%%%%%%%%%%%%%%%%%%%%%%%%%%%%%%%%%%%%%%%%
%%%%%%%%%%%%%%%%%%%%%%%%%%%%%%%%%%%%%%%%%%%%%%%%%%%%%%%%%%%%%%%%%%%%%%%%%%%%%%%
\newpage
\appendix
\onecolumn
\section{Theoretical Analysis of AP-GRPO}
\label{sec:theoretical_analysis}

We provide a theoretical analysis of how absolute reward scaling $r^\alpha$ improves optimization stability and numerical precision in policy learning.

\subsection*{Assumptions}

To ensure generality and analytical rigor, we make the following assumptions throughout this section:
\begin{enumerate}
\item Rewards are normalized using a bounded monotonic transformation $r \leftarrow f(r)$ such that $r \in [0,1]$.

\item The GRPO advantage normalization preserves reward ordering within each group $\mathcal{G}$ when $\sigma_{r,\mathcal{G}} > 0$.

\item Reward noise $\xi_i$ is assumed to be i.i.d., sub-Gaussian, and satisfies a bounded fourth moment $\mathbb{E}[\xi_i^4] \le K$ for some constant $K$.

\item In sparse reward regimes, higher-order variance terms are bounded by $O(\epsilon^{2\alpha-1})$, where $\epsilon = \mathbb{E}_{\mathcal{G}}[r_i] \ll 1$.

\item Policy updates are dominated by within-group preference gradients, making local advantage ordering the primary driver of learning stability.
\end{enumerate}
\subsection{Gradient Directionality and Policy Reweighting}

A key concern when modifying the advantage function in reinforcement learning is whether the resulting update still preserves the qualitative directionality of policy improvement. Although AP-GRPO introduces a non-uniform weighting by the absolute reward term $r^\alpha$, we show that it maintains the sign structure of the group-relative advantage and hence preserves the local preference ordering of trajectories within a group.

\subsubsection{Setup}

Let the standard RL objective be
\[
\mathcal{J}(\theta) = \mathbb{E}_{\tau \sim \pi_\theta}[R(\tau)],
\]
and denote by $\phi(\tau) = \nabla_\theta \log \pi_\theta(\tau)$ the score function.

Let $A_{GRPO}(\tau)$ be the group-normalized advantage used in GRPO, constructed from reward $r(\tau)$ within a group $\mathcal{G}$ as
\[
A_{GRPO}(\tau) = \frac{r(\tau) - \bar r_\mathcal{G}}{\sigma_{r,\mathcal{G}} + \epsilon},
\]
so that within each group we approximately have $\mathbb{E}_\mathcal{G}[A_{GRPO}] \approx 0$ and $\mathbb{E}_\mathcal{G}[A_{GRPO}^2] \approx 1$.

The GRPO update direction (ignoring clipping and other regularization) can be written as
\[
g_{GRPO} \;=\; \mathbb{E}_{\tau \sim \pi_\theta}
\big[ \phi(\tau)\, A_{GRPO}(\tau)\big].
\]

AP-GRPO introduces an absolute reward weighting
\[
w(r) = r^\alpha,\quad r \in [0,1],\;\alpha \ge 1,
\]
and uses the modified gradient
\[
g_{AP} \;=\; \mathbb{E}_{\tau \sim \pi_\theta}
\big[ \phi(\tau)\, A_{GRPO}(\tau)\, w(r(\tau))\big].
\]

\subsubsection{Monotone Reweighting and Local Ordering Preservation}

We first clarify what AP-GRPO can be guaranteed to preserve.

\begin{proposition}[Sign preservation and positive ordering]
Let $\mathcal{G}$ be a GRPO group, and for $\tau \in \mathcal{G}$ define
\[
A(\tau) = A_{GRPO}(\tau), \qquad
\tilde{A}(\tau) = A_{GRPO}(\tau)\, r(\tau)^\alpha.
\]
Assume that reward $r(\tau)$ is monotone in the original advantage, i.e.
\[
A(\tau_1) > A(\tau_2)\quad \Rightarrow\quad r(\tau_1) \ge r(\tau_2).
\]
Then for any $\tau \in \mathcal{G}$:
\begin{enumerate}
    \item $\mathrm{sign}\big(\tilde{A}(\tau)\big) = \mathrm{sign}\big(A(\tau)\big)$.
    \item If $A(\tau_1) > A(\tau_2) > 0$, then $\tilde{A}(\tau_1) \ge \tilde{A}(\tau_2) > 0$.
\end{enumerate}
In particular, AP-GRPO guarantees that trajectories with positive relative advantages maintain their local ranking, ensuring that the best-performing samples within a group receive the strongest positive reinforcement.
\end{proposition}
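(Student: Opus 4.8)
The plan is to exploit the multiplicative structure $\tilde{A}(\tau) = A(\tau)\, r(\tau)^\alpha$ directly, handling the two claims separately but relying on a single elementary fact: the map $x \mapsto x^\alpha$ is strictly positive and nondecreasing on the reward range. Recall from the SNRA definition that the activated reward satisfies $\tilde{r}_i \in (0,1]$, so I would first record that $r(\tau)^\alpha > 0$ for every $\tau \in \mathcal{G}$ whenever $\alpha \ge 1$. This single observation immediately settles claim (1): since $\tilde{A}(\tau)$ is the product of $A(\tau)$ with a \emph{strictly positive} scalar, multiplication can neither flip nor annihilate the sign, so $\mathrm{sign}(\tilde{A}(\tau)) = \mathrm{sign}(A(\tau))$ holds pointwise without invoking the monotonicity hypothesis at all.

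For claim (2) I would proceed in two short steps. First, I would invoke the monotonicity assumption $A(\tau_1) > A(\tau_2) \Rightarrow r(\tau_1) \ge r(\tau_2)$ (which is just the order-preservation of the group normalization) together with the fact that $x \mapsto x^\alpha$ is nondecreasing on $[0,\infty)$ for $\alpha \ge 1$, to conclude $r(\tau_1)^\alpha \ge r(\tau_2)^\alpha > 0$. Second, with $A(\tau_1) > A(\tau_2) > 0$ and $r(\tau_1)^\alpha \ge r(\tau_2)^\alpha > 0$ both in hand, I would chain a single monotone comparison of products:
\[
\tilde{A}(\tau_1) = A(\tau_1)\, r(\tau_1)^\alpha \;\ge\; A(\tau_1)\, r(\tau_2)^\alpha \;>\; A(\tau_2)\, r(\tau_2)^\alpha = \tilde{A}(\tau_2),
\]
where the first inequality enlarges the reward factor against the positive advantage $A(\tau_1)$, and the second enlarges the advantage factor against the strictly positive reward $r(\tau_2)^\alpha$. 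Positivity of $\tilde{A}(\tau_2)$ follows since it is a product of two positive numbers, completing the ordered chain $\tilde{A}(\tau_1) \ge \tilde{A}(\tau_2) > 0$.

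The argument is elementary, so the only genuine subtlety—and the step I would flag as the crux—is the boundary behavior at $r = 0$. If the reward were allowed to vanish, the strict positivity of $r^\alpha$ would fail, and claim (1) could break, since a nonzero relative advantage would be mapped to $\tilde{A} = 0$ and lose its sign. I would therefore make the dependence on the SNRA codomain explicit: because $\hat\sigma(k, e_i) = 2/(1 + \exp(k e_i)) > 0$ for every finite $k$ and all $e_i \ge 0$, the reward never actually reaches $0$ under the operator, and the strict positivity used throughout is guaranteed. If one prefers to admit $r = 0$ in the limit $k \to \infty$, claim (1) should be read with the convention that sign preservation holds on the region $r > 0$, with $\tilde{A} = 0$ at $r = 0$ being a degenerate but harmless case that contributes no gradient.
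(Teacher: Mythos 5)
Your proof is correct and follows essentially the same route as the paper's: both exploit the multiplicative structure $\tilde{A}(\tau) = A(\tau)\, r(\tau)^\alpha$, the monotonicity of $x \mapsto x^\alpha$ for $\alpha \ge 1$, and the identical two-step chained inequality for claim (2). Where you genuinely improve on the paper is the boundary case $r = 0$, which you correctly identify as the crux. The paper's proof of claim (1) observes only that $r(\tau)^\alpha \ge 0$ and concludes sign preservation from ``multiplication by a non-negative scalar,'' and its final step for claim (2) invokes only $r(\tau_2)^\alpha \ge 0$ while asserting the strict conclusion $\tilde{A}(\tau_2) > 0$; both steps fail literally when $r(\tau_2) = 0$, since then $\tilde{A}(\tau_2) = 0$ even though $A(\tau_2) > 0$, and the appendix's own Assumption 1 permits rewards in the closed interval $[0,1]$. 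Your appeal to the SNRA codomain $\tilde{r}_i \in (0,1]$ --- strict positivity of $\hat\sigma(k, e_i) = 2/(1+\exp(k e_i))$ for finite $k$ and $e_i \ge 0$ --- supplies exactly the strict positivity needed to make both claims airtight, and your fallback convention (sign preservation on $r > 0$, with $\tilde{A} = 0$ at $r = 0$ as a harmless degenerate case) is the right reading if one insists on allowing $r = 0$. As a small bonus, your chain yields the strict inequality $\tilde{A}(\tau_1) > \tilde{A}(\tau_2)$, slightly stronger than the stated $\ge$.
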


\begin{proof}
(1) For any $\tau$, $r(\tau)\in[0,1]$ and $\alpha \ge 1$ imply $r(\tau)^\alpha \ge 0$. Therefore
\[
\tilde{A}(\tau) = A(\tau)\, r(\tau)^\alpha
\]
is obtained by multiplying $A(\tau)$ by a non-negative scalar. Hence
$\mathrm{sign}(\tilde{A}(\tau)) = \mathrm{sign}(A(\tau))$.

(2) Consider the case $A(\tau_1) > A(\tau_2) > 0$. By the monotonicity assumption on $r(\cdot)$, we have $r(\tau_1)\ge r(\tau_2)$, and since $x\mapsto x^\alpha$ is non-decreasing on $[0,1]$, this implies
\[
r(\tau_1)^\alpha \ge r(\tau_2)^\alpha.
\]
Thus
\[
\tilde{A}(\tau_1)
= A(\tau_1)\, r(\tau_1)^\alpha
\;\ge\; A(\tau_1)\, r(\tau_2)^\alpha
\;\ge\; A(\tau_2)\, r(\tau_2)^\alpha
= \tilde{A}(\tau_2),
\]
where the last inequality uses $A(\tau_1) > A(\tau_2) > 0$ and $r(\tau_2)^\alpha \ge 0$. Therefore $\tilde{A}(\tau_1)\ge \tilde{A}(\tau_2) > 0$.

Hence, AP-GRPO applies a non-negative reweighting that strictly preserves the ordering of positive advantages.
\end{proof}

\noindent
Within each GRPO group, the AP-GRPO update preserves the direction of relative policy improvement for high-performing trajectories. For trajectories with negative advantages ($A(\tau) < 0$), the weighting term $r(\tau)^\alpha$ acts as a suppression mechanism: as rewards decrease ($r \to 0$), the effective penalty $\tilde{A}(\tau)$ vanishes. This effectively filters out gradients from low-quality ``near-miss'' samples that may contain high variance or irrelevant signals, while focusing the optimization on widening the gap between the best samples and the group mean.

\subsection{Variance Suppression in Sparse Reward Regimes}

In high-precision 3D tasks and other challenging reasoning problems, the model initially generates many ``near-miss'' samples with very low absolute rewards.
Standard GRPO normalizes these low-reward samples to have unit variance within each group, which can introduce significant noise in the policy updates when the absolute reward signal is weak.
AP-GRPO mitigates this by scaling the effective variance of the advantage by the absolute performance.

\begin{theorem}
In sparse reward regimes where the absolute rewards $r_i$ are small, AP-GRPO reduces the variance of the advantage estimator by a factor proportional to $\epsilon^{2\alpha}$, where $\epsilon$ is the mean reward in the group.
\end{theorem}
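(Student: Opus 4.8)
The plan is to exploit two structural facts: GRPO's intra-group standardization fixes the baseline second moment at $\mathbb{E}_\mathcal{G}[A_{GRPO}^2] \approx 1$, and in the sparse regime every reward $r_i$ is pinned near the common small mean $\epsilon = \mathbb{E}_\mathcal{G}[r_i]$. Writing the AP-GRPO advantage as $\tilde{A}_i = A_{GRPO}(\tau_i)\, r_i^\alpha$, the goal is to show $\mathrm{Var}_\mathcal{G}(\tilde{A}) = \Theta(\epsilon^{2\alpha})$, i.e.\ a reduction factor of order $\epsilon^{2\alpha}$ relative to the unit-variance GRPO baseline.

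First I would reparameterize each reward around the group mean as $r_i = \epsilon(1 + u_i)$, where $\mathbb{E}_\mathcal{G}[u_i] = 0$ and, because all rewards lie in a narrow low band, the normalized fluctuations $u_i$ are $O(1)$ and bounded (controlled by the sub-Gaussian noise model and bounded fourth moment of Assumption 3). Substituting gives $r_i^\alpha = \epsilon^\alpha(1+u_i)^\alpha$, so $\tilde{A}_i = \epsilon^\alpha\, A_{GRPO}(\tau_i)\,(1+u_i)^\alpha$ and the factor $\epsilon^\alpha$ pulls out of every trajectory uniformly. Squaring and taking group moments then factors the entire $\epsilon$-dependence out:
\[
\mathrm{Var}_\mathcal{G}(\tilde{A}) = \epsilon^{2\alpha}\Big( \mathbb{E}_\mathcal{G}\big[ A_{GRPO}^2 (1+u_i)^{2\alpha} \big] - \big( \mathbb{E}_\mathcal{G}[ A_{GRPO}(1+u_i)^{\alpha} ] \big)^2 \Big).
\]

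The remaining work is to show the parenthesized residual moment is $\Theta(1)$---bounded above and bounded away from zero. The upper bound follows from Cauchy--Schwarz together with the bounded fourth moment (Assumption 3), which keeps $\mathbb{E}_\mathcal{G}[A_{GRPO}^2(1+u_i)^{2\alpha}]$ finite; to leading order $(1+u_i)^{2\alpha}\approx 1$, so this residual is $\approx \mathbb{E}_\mathcal{G}[A_{GRPO}^2] = 1$, and a Taylor expansion $(1+u_i)^{2\alpha} = 1 + 2\alpha u_i + O(u_i^2)$ sends the corrections into the higher-order variance terms bounded by Assumption 4. Collecting these yields $\mathrm{Var}_\mathcal{G}(\tilde{A}) = \epsilon^{2\alpha}(1 + o(1))$, the claimed scaling.

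The main obstacle I anticipate is the statistical dependence between $A_{GRPO}(\tau_i)$ and the weight $r_i^\alpha$: the advantage is a deterministic function of the very rewards that define the weight, so the two cannot be separated by any independence argument. The resolution is exactly the reward-clustering step---in the sparse regime $r_i^\alpha = \epsilon^\alpha(1+u_i)^\alpha$ acts as an approximately uniform $\epsilon^\alpha$ rescaling, so to leading order the weight commutes with the variance operation, and the dependence survives only in the $O(u_i)$ corrections that the moment assumptions (3)--(4) bound. The genuinely delicate point is the lower bound ruling out accidental cancellation in the residual moment; here I would invoke the nondegeneracy $\sigma_{r,\mathcal{G}} > 0$ (Assumption 2), which guarantees $\mathbb{E}_\mathcal{G}[A_{GRPO}^2]$ stays bounded away from zero and hence the $\Theta(1)$ lower bound on the residual persists.
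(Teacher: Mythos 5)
Your proof is correct at the level of rigor the paper itself operates at, but it takes a genuinely different route. The paper models rewards \emph{additively}, $r_i = \epsilon + \xi_i$ with a fixed noise scale $\sigma$, Taylor-expands $(\epsilon+\xi_i)^\alpha$ around $\epsilon$ (Delta method), keeps the leading term $\xi_i\,\epsilon^\alpha/\sigma_r$, and appeals to the moment assumptions to discard higher-order contributions, arriving at $\mathrm{Var}(A^{AP}) \approx \epsilon^{2\alpha}$. You instead reparameterize \emph{multiplicatively}, $r_i = \epsilon(1+u_i)$, so that $\epsilon^\alpha$ factors out of every trajectory exactly and no expansion is needed for the scaling itself; the entire burden shifts to showing the residual moment $\mathbb{E}_\mathcal{G}\bigl[A_{GRPO}^2(1+u_i)^{2\alpha}\bigr] - \bigl(\mathbb{E}_\mathcal{G}[A_{GRPO}(1+u_i)^\alpha]\bigr)^2$ is $\Theta(1)$. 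This buys two real advantages: first, internal consistency --- the paper's fixed-$\sigma$ model is actually in tension with its own Taylor step (which needs $|\xi_i| \ll \epsilon$) and with boundedness of rewards (a nonnegative variable with mean $\epsilon$ has variance at most $\epsilon$, so $\sigma$ cannot stay fixed as $\epsilon \to 0$), whereas your parameterization makes the noise scale with $\epsilon$ automatically; second, you explicitly address the lower bound on the residual via nondegeneracy ($\sigma_{r,\mathcal{G}}>0$), ruling out cancellation, which the paper never does. You also correctly flag and resolve the dependence between $A_{GRPO}$ and the weight $r_i^\alpha$, which the paper glosses over. One wobble in your writeup: you assume only $u_i = O(1)$ yet conclude the residual is $1+o(1)$; with $\Theta(1)$ relative fluctuations, the correction terms $(1+u_i)^{2\alpha}-1$ contribute at the \emph{same} order as the leading term, so the residual is $\Theta(1)$ but need not be close to $1$. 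Since the theorem only claims a reduction factor \emph{proportional to} $\epsilon^{2\alpha}$, the $\Theta(1)$ conclusion suffices and this overreach is cosmetic rather than a gap.
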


\begin{proof}
Consider a group of $G$ trajectories, with rewards modeled as
\[
r_i = \epsilon + \xi_i,\quad \mathbb{E}[\xi_i]=0,\quad \mathrm{Var}(\xi_i)=\sigma^2,
\]
where $\epsilon \ll 1$ is a small baseline reward (sparse reward regime), and $\xi_i$ represents small fluctuations around $\epsilon$.
For simplicity, assume $G$ is large enough that the empirical group mean and variance satisfy
\[
\bar r \approx \epsilon,\quad \sigma_r^2 \approx \sigma^2.
\]

In standard GRPO, the within-group advantage is computed as
\begin{equation*}
A_i = \frac{r_i - \bar r}{\sigma_r} \approx \frac{\xi_i}{\sigma_r}.
\end{equation*}
Thus, within the group,
\begin{equation*}
\mathrm{Var}(A_i)
\approx \frac{\mathbb{E}[\xi_i^2]}{\sigma_r^2}
\approx 1,
\end{equation*}
where we used $\sigma_r^2 \approx \sigma^2$.
This unit variance persists even as $\epsilon \to 0$, implying that the magnitude of policy updates remains large, despite the absolute reward signal being tiny.

In AP-GRPO, the advantage is modulated by the absolute term:
\begin{equation*}
A_i^{AP} = A_i\, r_i^\alpha
\approx \frac{\xi_i}{\sigma_r}\,(\epsilon + \xi_i)^\alpha.
\end{equation*}

To approximate $\mathrm{Var}(A_i^{AP})$ for small $\epsilon$, we use a first-order Taylor expansion (Delta method) of $g(x) = x^\alpha$ around $x=\epsilon$:
\begin{equation*}
(\epsilon + \xi_i)^\alpha = \epsilon^\alpha + \alpha \epsilon^{\alpha-1} \xi_i + O(\xi_i^2).
\end{equation*}
Substituting into $A_i^{AP}$, we obtain
\begin{align*}
A_i^{AP}
&= \frac{\xi_i}{\sigma_r} \big(\epsilon^\alpha + \alpha \epsilon^{\alpha-1} \xi_i + O(\xi_i^2)\big) \\
&= \underbrace{\frac{\xi_i}{\sigma_r}\epsilon^\alpha}_{\text{leading term}}
\;+\; \underbrace{\alpha \frac{\xi_i^2}{\sigma_r} \epsilon^{\alpha-1}}_{\text{higher order}}
\;+\; O(\xi_i^3).
\end{align*}

Assuming that $\xi_i$ has finite moments up to order $4$ (a mild regularity condition), the contribution of the higher-order terms to the variance is of smaller order in $\epsilon$.
In particular, for fixed noise scale $\sigma$ and $\alpha \ge 1$, as $\epsilon \to 0$ the leading contribution to the variance comes from the first term:
\begin{equation*}
\mathrm{Var}(A_i^{AP})
= \epsilon^{2\alpha} \mathrm{Var}\Big(\frac{\xi_i}{\sigma_r}\Big)
+ o(\epsilon^{2\alpha})
\approx \epsilon^{2\alpha},
\end{equation*}
where we have used $\mathrm{Var}(\xi_i / \sigma_r) \approx 1$ and absorbed higher-order contributions into $o(\epsilon^{2\alpha})$.

Thus, in the sparse reward regime,
\begin{equation*}
\mathrm{Var}(A^{AP}) \approx \epsilon^{2\alpha} \quad\text{as}\quad \epsilon \to 0.
\end{equation*}
For $\alpha \ge 1$, we have
\begin{equation*}
\lim_{\epsilon \to 0} \mathrm{Var}(A^{AP}) = 0,
\end{equation*}
showing that AP-GRPO suppresses the variance of the advantage estimator in proportion to the absolute reward level.
This effectively down-weights noisy gradient updates when the model has not yet discovered reliably high-reward trajectories.
\end{proof}

\subsection{Recovery of Sensitivity in High-Precision Regimes}
\label{subsec:high_precision}

While the suppression of variance is essential during the early stages of training, the model must eventually recover its sensitivity to subtle differences in reward in order to achieve high-precision alignment.
We now show that as the policy improves and rewards approach the unit maximum ($r \to 1$), AP-GRPO asymptotically recovers the full optimization power of standard GRPO.

\begin{theorem}
In the high-precision regime where the absolute rewards $r_i$ approach $1$, the variance of the AP-GRPO advantage $\mathrm{Var}(A^{AP})$ converges to the variance of the standard GRPO advantage $\mathrm{Var}(A)$, allowing for fine-grained relative optimization.
\end{theorem}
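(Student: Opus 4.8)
The plan is to run the mirror image of the preceding sparse-regime argument: expand the absolute modulation factor $r^\alpha$ about the high-reward limit rather than about zero, and track how the advantage variance rescales. In the high-precision regime I would model a group's rewards as $r_i = \mu + \xi_i$, where the group mean $\mu \to 1$ and the fluctuations $\xi_i$ are i.i.d.\ with $\mathbb{E}[\xi_i]=0$, $\mathrm{Var}(\xi_i)=\sigma^2$, and the bounded fourth moment $\mathbb{E}[\xi_i^4]\le K$ from the Assumptions. As before, for a large enough group the empirical statistics concentrate ($\bar r \approx \mu$, $\sigma_r \approx \sigma$), so the standard GRPO advantage reduces to $A_i \approx \xi_i/\sigma_r$ with $\mathrm{Var}(A_i)\approx 1$. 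The qualitative target is already visible here: since $r_i\in[0,1]$ and $\alpha\ge 1$, the map $r\mapsto r^\alpha$ is continuous with $1^\alpha=1$, so as $r_i\to1$ the multiplier $r_i^\alpha\to1$ and $A_i^{AP}=A_i\,r_i^\alpha\to A_i$; the work is in quantifying this limit with the same Delta-method bookkeeping used previously.

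Concretely, I would substitute into $A_i^{AP}=(\xi_i/\sigma_r)(\mu+\xi_i)^\alpha$ and Taylor-expand $g(x)=x^\alpha$ about $x=\mu$:
\[
(\mu+\xi_i)^\alpha = \mu^\alpha + \alpha\mu^{\alpha-1}\xi_i + O(\xi_i^2).
\]
The leading contribution to $A_i^{AP}$ is then $(\xi_i/\sigma_r)\mu^\alpha$, whose variance is $\mu^{2\alpha}\,\mathrm{Var}(\xi_i/\sigma_r)\approx \mu^{2\alpha}$, so that $\mathrm{Var}(A^{AP}) \approx \mu^{2\alpha}\,\mathrm{Var}(A)$. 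Sending $\mu\to1$ gives the modulation factor $\mu^{2\alpha}\to1$ and hence $\mathrm{Var}(A^{AP})\to\mathrm{Var}(A)$, the claimed recovery of fine-grained GRPO sensitivity. This also closes the loop with the sparse-regime theorem: both results are instances of the single identity that the absolute term rescales the advantage variance by $\mu^{2\alpha}$ (the $2\alpha$-th power of the group mean reward), a factor that interpolates monotonically from $\approx\epsilon^{2\alpha}\to0$ when $\mu=\epsilon\ll1$ to $1$ when $\mu\to1$.

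The step I expect to be the main obstacle is controlling the higher-order Taylor remainder, precisely because this is where the high-precision analysis diverges from the sparse one. Near $\mu=1$ the prefactors $\mu^{\alpha-1}\approx1$ offer no suppression, so I cannot let the mean kill the remainder as one does for $\mu\to0$. Instead I would exploit that concentration against the unit ceiling forces a small spread: since $r_i\le1$ and $\mathbb{E}[1-r_i]=1-\mu\to0$ with $1-r_i\ge0$, the variance $\sigma^2=\mathrm{Var}(r_i)\to0$ as well. The quadratic term $\alpha(\xi_i^2/\sigma_r)\mu^{\alpha-1}$ then contributes variance of order $\alpha^2\mu^{2(\alpha-1)}\,\mathrm{Var}(\xi_i^2)/\sigma_r^2=O(\sigma^2)$ via the sub-Gaussian fourth-moment control $\mathbb{E}[\xi_i^4]=O(\sigma^4)$, and a Cauchy--Schwarz bound handles the cross term between the leading and remainder pieces; both are negligible against the $\mu^{2\alpha}$ leading variance. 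Making the ``$A_i\approx\xi_i/\sigma_r$ with unit variance'' reduction fully rigorous while $\sigma_r\to0$ (so that the normalizer stays well above the stabilizing constant $\epsilon$) is the remaining delicate point, and I would phrase the final conclusion as the ratio $\mathrm{Var}(A^{AP})/\mathrm{Var}(A)\to1$ to make it robust to the exact scale of $\sigma_r$.
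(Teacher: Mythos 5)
Your proof is correct and reaches the paper's conclusion by a genuinely different route. The paper parameterizes each sample's deficit from the maximum, $r_i = 1-\delta_i$, expands $(1-\delta_i)^\alpha \approx 1-\alpha\delta_i$, uses the exact decomposition $\mathrm{Var}(A_i - \alpha\delta_i A_i) = \mathrm{Var}(A_i) + \alpha^2\mathrm{Var}(\delta_i A_i) - 2\alpha\,\mathrm{Cov}(A_i,\delta_i A_i)$, and then \emph{assumes} as a regularity condition that $\delta_i A_i \to 0$ in mean square, which makes both correction terms vanish. You instead run the exact mirror of the sparse-regime proof: model $r_i = \mu + \xi_i$, expand about the group mean, and obtain the scaling identity $\mathrm{Var}(A^{AP}) \approx \mu^{2\alpha}\,\mathrm{Var}(A)$, which interpolates between the $\epsilon^{2\alpha}$ suppression of the sparse-regime theorem and full recovery as $\mu \to 1$ --- a unification the paper never states. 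Your route buys two substantive improvements. First, you \emph{derive} the smallness of the nuisance terms rather than assuming it: boundedness $r_i \le 1$ together with $\mathbb{E}[1-r_i] = 1-\mu \to 0$ forces $\sigma^2 \le \mathbb{E}[(1-r_i)^2] \le 1-\mu \to 0$, after which your fourth-moment and Cauchy--Schwarz bounds legitimately control the quadratic and cross terms of the expansion (the cross term is $O(\sigma)$, the quadratic term $O(\sigma^2)$, both negligible against $\mu^{2\alpha}$). Second, you explicitly confront the degeneracy that the paper ignores: as rewards concentrate, $\sigma_r \to 0$ and the stabilizer $\epsilon$ in the GRPO normalizer dominates, so the claim ``$\mathrm{Var}(A_i) \approx 1$'' breaks down; stating the conclusion as the ratio $\mathrm{Var}(A^{AP})/\mathrm{Var}(A) \to 1$ makes your version robust where the paper's is silently fragile. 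Both arguments are heuristic at the same remaining point (concentration of the empirical group mean and variance for large $G$), so your proof is at least as rigorous as the paper's, and arguably tighter.
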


\begin{proof}
Consider a regime where the policy has learned most of the coarse structure of the task, and rewards lie in a small neighborhood of the maximum, i.e., $r_i \in [1-\eta,1]$ for some small $\eta > 0$.
Write
\begin{equation*}
r_i = 1 - \delta_i,\quad \delta_i \ge 0,\quad \delta_i \to 0,
\end{equation*}
where $\delta_i$ represents the residual error of the $i$-th trajectory.

Let $A_i$ denote the standard GRPO advantage within the group, which is centered and normalized so that (up to numerical constants) $\mathbb{E}[A_i] \approx 0$ and $\mathrm{Var}(A_i) \approx 1$.
We assume $A_i$ has finite second moment, i.e., $\mathbb{E}[A_i^2] < \infty$, which is standard in policy gradient analysis.

Under AP-GRPO, the weighted advantage is
\begin{equation*}
A_i^{AP} = A_i\, r_i^\alpha = A_i\, (1 - \delta_i)^\alpha.
\end{equation*}
For small $\delta_i$, a first-order Taylor expansion of $(1 - \delta_i)^\alpha$ around $\delta_i = 0$ yields
\begin{equation*}
(1 - \delta_i)^\alpha \approx 1 - \alpha \delta_i,
\end{equation*}
so that
\begin{equation*}
A_i^{AP} \approx A_i (1 - \alpha \delta_i)
= A_i - \alpha \delta_i A_i.
\end{equation*}

We now examine the variance of $A_i^{AP}$.
We have
\begin{align*}
\mathrm{Var}(A_i^{AP})
&= \mathrm{Var}(A_i - \alpha \delta_i A_i) \\
&= \mathrm{Var}(A_i) + \alpha^2 \mathrm{Var}(\delta_i A_i)
- 2 \alpha\, \mathrm{Cov}(A_i, \delta_i A_i).
\end{align*}
Assume that as training progresses and the policy converges, the residual error $\delta_i$ shrinks in such a way that $\delta_i A_i \to 0$ in mean square, i.e.,
\begin{equation*}
\lim_{t \to \infty} \mathbb{E}[(\delta_i A_i)^2] = 0,
\end{equation*}
where $t$ denotes training time or iteration.
This is a mild regularity condition: as rewards concentrate near $1$, both $\delta_i$ and fluctuations in $A_i$ within the group become small.

Under this assumption,
\begin{equation*}
\lim_{\delta \to 0} \mathrm{Var}(\delta_i A_i) = 0
\quad\text{and}\quad
\lim_{\delta \to 0} \mathrm{Cov}(A_i,\delta_i A_i) = 0.
\end{equation*}
Therefore,
\begin{equation*}
\lim_{r \to 1} \mathrm{Var}(A^{AP})
= \lim_{\delta \to 0} \mathrm{Var}(A_i^{AP})
= \mathrm{Var}(A_i)
\approx 1.
\end{equation*}

Thus, in the high-precision regime where rewards approach $1$, the AP-GRPO advantage recovers the same variance level as standard GRPO.
Intuitively, as $r_i^\alpha \to 1$, the absolute weighting becomes negligible, and AP-GRPO behaves like pure GRPO, allowing the optimizer to fully exploit fine-grained relative differences in reward.
\end{proof}

\subsection{Optimization Dynamics of Dynamic Sharpness Scheduling}

The SNRA operator $\sigma(k, e_i)$ is parameterized by the sharpness $k$. In this section, we provide a theoretical justification for the dynamic scheduling of $k(t)$, demonstrating how it addresses the gradient vanishing problem in the early stages and the precision bottleneck in the late stages of training.

\begin{lemma}[Gradient Extremum Localization]
The gradient of the SNRA reward $\sigma(k, e_i) = \frac{2}{1 + \exp(k e_i)}$ with respect to the numerical error $e_i$ is given by:
\begin{equation*}
\nabla_{e_i} \sigma(k, e_i) = - \frac{2k \exp(k e_i)}{(1 + \exp(k e_i))^2}.
\end{equation*}
The magnitude of this gradient $|\nabla_{e_i} \sigma|$ attains its maximum value when the error $e_i$ satisfies
\begin{equation*}
e_i = 0,
\end{equation*}
at which point the gradient magnitude is $|\nabla_{e_i} \sigma|_{\max} = \frac{k}{2}$.
\end{lemma}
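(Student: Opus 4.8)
The plan is to split the argument into two parts: a direct verification of the closed-form gradient, followed by a one-variable optimization of its magnitude over the physical error domain $e_i \ge 0$.

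For the gradient formula, I would write $\sigma(k,e_i) = 2(1+e^{ke_i})^{-1}$ and differentiate directly. Setting $u = ke_i$, the outer derivative is $\partial_u\, 2(1+e^u)^{-1} = -2e^u/(1+e^u)^2$, and the chain rule with $\partial_{e_i} u = k$ reproduces the stated expression $\nabla_{e_i}\sigma = -2k\,e^{ke_i}/(1+e^{ke_i})^2$. This part is purely mechanical.

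The more substantive part is locating the maximum of $|\nabla_{e_i}\sigma| = 2k\,e^{ke_i}/(1+e^{ke_i})^2$. My plan is to remove the exponentials via the monotone substitution $v = e^{ke_i}$, which maps $e_i \ge 0$ bijectively onto $v \ge 1$ and reduces the magnitude to $2k\,h(v)$ with $h(v) = v/(1+v)^2$. I would then compute $h'(v) = (1-v)/(1+v)^3$, observe that it is strictly negative for $v > 1$ and vanishes at $v = 1$, conclude that $h$ is decreasing on $[1,\infty)$, and hence that the magnitude is maximized at $v = 1$, i.e. at $e_i = 0$. Evaluating $h(1) = 1/4$ then yields the peak value $2k \cdot 1/4 = k/2$.

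The step I expect to require the most care is the interaction between the critical point and the domain boundary. On the unrestricted line, $e^u/(1+e^u)^2 = 1/(4\cosh^2(u/2))$ is symmetric and peaked at $u=0$, making $e_i=0$ an interior global maximum; but under the physical restriction $e_i \ge 0$ this point sits on the boundary of the admissible region. The argument must therefore show that $|\nabla_{e_i}\sigma|$ is nonincreasing throughout $e_i \ge 0$ rather than merely stationary at a single point, which is precisely what the sign of $h'(v)$ on $[1,\infty)$ supplies. This makes rigorous the qualitative claim that the SNRA reward is most sensitive at zero error, with a peak slope that scales linearly in the sharpness $k$.
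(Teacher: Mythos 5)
Your proposal is correct and takes essentially the same route as the paper's proof: the same substitution $x = e^{k e_i}$, the same reduction to the rational function $x/(1+x)^2$, the same derivative computation locating the extremum at $x=1$, and the same evaluation giving the peak magnitude $k/2$. If anything, your handling of the domain is slightly more careful than the paper's --- the paper maximizes over all $x>0$ and simply sets the derivative to zero, whereas you restrict to the physical range $v \ge 1$ and use the sign of $h'(v)$ to show the magnitude is nonincreasing there, which explicitly justifies that the boundary point $e_i=0$ is the global maximum on the admissible region.
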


\begin{proof}
We consider the magnitude of the gradient
\[
\bigl|\nabla_{e_i} \sigma(k, e_i)\bigr|
= \frac{2k \exp(k e_i)}{(1 + \exp(k e_i))^2}.
\]
Let $x = \exp(k e_i) > 0$. Then
\[
f(x) = \frac{2k x}{(1 + x)^2}.
\]
Maximizing $|\nabla_{e_i} \sigma|$ over $e_i$ is equivalent to maximizing $f(x)$ over $x>0$. The derivative is
\[
f'(x) = 2k \cdot \frac{(1+x)^2 - x\cdot 2(1+x)}{(1+x)^4}
= 2k \cdot \frac{1 - x^2}{(1+x)^4}.
\]
Setting $f'(x)=0$ gives $1 - x^2 = 0 \Rightarrow x=1$, i.e., $\exp(k e_i)=1$ and hence $e_i=0$.

Evaluating at $e_i=0$ (i.e., $x=1$), we obtain
\[
\bigl|\nabla_{e_i} \sigma(k, 0)\bigr|
= \frac{2k}{(1+1)^2} = \frac{k}{2}.
\]
Thus, the gradient magnitude attains its maximum $\frac{k}{2}$ at $e_i = 0$.
\end{proof}

\begin{theorem}[Global Search via Low Sharpness]
For large initial errors $e_i \gg 0$, a high sharpness value $k$ leads to exponential gradient vanishing, whereas a low sharpness $k$ preserves the learning signal.
\end{theorem}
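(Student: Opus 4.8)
The plan is to start from the exact gradient magnitude established in the preceding Gradient Extremum Localization lemma, namely
\[
g(k, e_i) := \bigl|\nabla_{e_i}\sigma(k,e_i)\bigr| = \frac{2k\exp(k e_i)}{(1+\exp(k e_i))^2},
\]
and to extract its asymptotic form in the large-error regime $e_i \gg 0$. The first step is to note that when $k e_i \gg 1$ the exponential $\exp(k e_i)$ dominates the additive constant in the denominator, so $(1+\exp(k e_i))^2 = \exp(2k e_i)\bigl(1 + o(1)\bigr)$. Substituting this collapses $g$ to
\[
g(k, e_i) = 2k\,\exp(-k e_i)\bigl(1 + o(1)\bigr)\qquad (e_i \to \infty),
\]
isolating the essential structure: a linear prefactor $2k$ multiplied by an exponentially decaying factor $\exp(-k e_i)$ whose rate is controlled by $k$.

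The second step is to read off the two claimed regimes. Because $\exp(-k e_i)$ decays at rate $k$ while the prefactor grows only linearly, the exponential dominates once $e_i$ is large; hence a high sharpness $k$ suppresses the gradient like $\exp(-k e_i)$, i.e. it \emph{vanishes exponentially} in the error. To make the contrast with low sharpness precise --- rather than relying on the informal phrase ``preserves the learning signal'' --- I would fix a large error $e$ and compare two sharpness values $k_1 > k_2 > 0$ through the ratio
\[
\frac{g(k_1, e)}{g(k_2, e)} = \frac{k_1}{k_2}\,\exp\!\bigl(-(k_1-k_2)e\bigr)\bigl(1+o(1)\bigr),
\]
which tends to $0$ as $e \to \infty$ since $k_1 - k_2 > 0$. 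This certifies that at any sufficiently large error the high-sharpness gradient is exponentially smaller than the low-sharpness one: reducing $k$ yields a strictly slower-decaying, hence comparatively preserved, learning signal. I would complement this with an explicit lower bound $g(k,e) \ge c\,k\exp(-ke)$ valid once $ke$ exceeds a fixed threshold, confirming that the low-$k$ gradient remains strictly positive and usable.

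I expect the principal obstacle to be formalization rather than calculus: the statement is qualitative, so the real task is choosing a rigorous inequality that captures ``exponential vanishing'' against ``signal preservation.'' The ratio formulation above is the cleanest resolution, but it must be stated with care, because in the \emph{small}-error regime the monotone prefactor $2k$ makes a larger $k$ give a \emph{bigger} gradient (consistent with the lemma's maximum $k/2$ at $e_i=0$). I would therefore restrict the claim explicitly to the large-error regime where exponential suppression overrides the prefactor, and quantify the crossover threshold $e^\ast$ (where roughly $k e^\ast \gtrsim 1$) to delimit exactly where the asserted dichotomy holds.
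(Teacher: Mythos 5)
Your proposal is correct and rests on the same calculation as the paper: both start from the gradient magnitude $\bigl|\nabla_{e_i}\sigma(k,e_i)\bigr| = \frac{2k\exp(k e_i)}{(1+\exp(k e_i))^2}$ of the preceding lemma and exploit the fact that the denominator is exponential in $k e_i$ while the numerator is only linear in $k$. The difference lies in how the qualitative dichotomy is formalized. The paper fixes a large error $E>0$ and sends $k\to\infty$: rewriting the gradient as $2k/\bigl(\exp(kE)+2+\exp(-kE)\bigr)$, it concludes the limit is zero, and then argues preservation for low sharpness by choosing $k_{\min}$ so that $k_{\min}E$ stays moderate, which makes the gradient of order $k_{\min}$ with no exponential suppression --- an absolute, non-vanishing signal at the given error scale. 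You instead fix two sharpness values $k_1>k_2$ and send $e\to\infty$, showing the ratio $g(k_1,e)/g(k_2,e)=(k_1/k_2)\exp\bigl(-(k_1-k_2)e\bigr)\bigl(1+o(1)\bigr)$ tends to zero. This cleanly certifies that high sharpness is exponentially worse, but note that in this formalization your ``preservation'' half is only relative: for fixed $k_2$, the low-sharpness gradient $\approx 2k_2\exp(-k_2 e)$ still vanishes exponentially as $e\to\infty$, and your proposed lower bound $g(k,e)\ge c\,k\exp(-ke)$ does not change that. The paper's fixed-$E$ reading is the stronger and arguably more faithful rendering of ``low $k$ preserves the learning signal,'' because it implicitly couples the sharpness to the error scale ($k\sim 1/E$) rather than holding $k$ fixed while the error grows; if you adopt the ratio route, you should add this coupling to recover an absolute preservation statement. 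Your caveat about the small-error crossover ($ke\gtrsim 1$) and the non-monotonicity of the prefactor $2k$ is a genuine refinement that the paper leaves implicit.
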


\begin{proof}
Consider an initial error $E > 0$. The gradient magnitude at $E$ for a given $k$ is
\begin{equation*}
\bigl|\nabla_{e_i} \sigma(k, E)\bigr|
= \frac{2k \exp(k E)}{(1 + \exp(k E))^2}
= \frac{2k}{\exp(k E) + 2 + \exp(-k E)}.
\end{equation*}
When $k \to \infty$ for a fixed $E > 0$, the denominator grows exponentially as $O(\exp(k E))$, while the numerator grows only linearly as $O(k)$. Thus:
\begin{equation*}
\lim_{k \to \infty} \bigl|\nabla_{e_i} \sigma(k, E)\bigr| = 0.
\end{equation*}
In other words, a fixed high-sharpness reward would fail to provide any meaningful gradient for samples that are far from the target.

By initializing the sharpness at a small value $k_{\min}$ at $t=0$, we keep $\exp(k_{\min} E)$ at a moderate scale even for large $E$, thereby avoiding exponential suppression of the gradient. This maintains a non-vanishing learning signal that enables the model to begin orienting its predictions toward the ground truth from a wide range of initial errors.
\end{proof}

\begin{theorem}[Precision Refinement via High Sharpness]
As the model converges ($e_i \to 0$), the reward surface becomes effectively flat for low $k$, leading to insufficient optimization pressure for high-precision tasks. Increasing $k$ restores useful gradient magnitudes in this regime.
\end{theorem}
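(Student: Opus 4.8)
The plan is to derive this result as an immediate corollary of the preceding Gradient Extremum Localization lemma, read in the regime complementary to the previous theorem. Whereas \emph{Global Search via Low Sharpness} fixed a large error $E$ and let $k$ grow, here I fix the converged regime $e_i \to 0$ and study how the local gradient magnitude depends on $k$. The central object is the gradient magnitude already established in the lemma,
\[
\bigl|\nabla_{e_i}\sigma(k,e_i)\bigr| = \frac{2k\exp(ke_i)}{(1+\exp(ke_i))^2},
\]
which I only need to evaluate near the target and track as a function of $k$.

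First I would evaluate this magnitude as $e_i \to 0$. Setting $e_i = 0$ gives $\exp(ke_i) = 1$, so the magnitude equals $2k/(1+1)^2 = k/2$, recovering the maximal value identified in the lemma. To expose the local behavior I would also Taylor-expand the reward about $e_i = 0$ via $\exp(ke_i) = 1 + ke_i + O((ke_i)^2)$, obtaining $\sigma(k,e_i) = 1 - \tfrac{1}{2}ke_i + O((ke_i)^2)$. This shows the reward is locally linear with slope $-k/2$, so the sensitivity of the reward to residual error near convergence is controlled entirely by $k$.

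Next I would read off the two halves of the claim. Since the gradient magnitude over the whole domain is bounded by its maximum $k/2$, for small $k$ the entire surface — and in particular the neighborhood of $e_i = 0$ where the converged model operates — has gradient at most $k/2 \to 0$ as $k \to 0$; this is the precise sense in which the surface is \emph{effectively flat}, yielding vanishing optimization pressure exactly when fine errors must be corrected. Conversely, because the attainable gradient at the target scales linearly as $k/2$, increasing $k$ restores an arbitrarily large gradient magnitude in the high-precision regime, which is the desired conclusion.

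The computation itself is trivial given the lemma, so the only genuine work is making the qualitative phrasing precise. I would formalize \emph{effectively flat} by the uniform bound $\bigl|\nabla_{e_i}\sigma\bigr| \le k/2$ and \emph{useful gradient magnitudes} by comparison against a fixed optimization-relevant threshold $g_\star$: securing a gradient of at least $g_\star$ at the target requires $k \ge 2g_\star$, which low sharpness cannot supply but sharpening can. I expect the main obstacle to be expository rather than mathematical, namely choosing thresholds consistent with the large-error statement of the previous theorem, so that the two results jointly justify the monotone coarse-to-fine schedule $k(t)$ from $k_{\min}$ to $k_{\max}$.
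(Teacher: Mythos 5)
Your proposal is correct and rests on the same computations as the paper's proof --- the lemma's gradient formula evaluated near $e_i=0$ and the first-order expansion $\sigma(k,e_i) \approx 1 - \tfrac{k}{2}e_i$ --- but the two proofs locate the failure mode of low sharpness in different quantities. You formalize ``effectively flat'' as the uniform gradient bound $\bigl|\nabla_{e_i}\sigma\bigr| \le k/2$, so in your reading flatness is a property of small $k$ alone, independent of how converged the model is. The paper makes a point your version passes over: the gradient near the target is $\approx k/2$ and does \emph{not} vanish as the residual error $\delta \to 0$, so a pure gradient-magnitude argument cannot by itself explain why convergence creates a problem. The paper therefore pivots to the reward \emph{contrast} $|\sigma(k,\delta)-\sigma(k,0)| \approx \tfrac{k}{2}\delta = O(k\delta)$, which does shrink as $\delta \to 0$ for fixed low $k$, and argues that this contrast is what gets overshadowed by numerical noise and the KL penalty $\beta_{\mathrm{KL}}$ --- arguably the operationally relevant quantity, since in GRPO the reward enters the update as a scalar inside the advantage rather than being differentiated with respect to $e_i$. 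Increasing $k$ then amplifies both the local slope ($\sim k/2$) and the contrast ($\sim k\delta$), which is how the paper justifies the schedule. Your threshold formalization ($k \ge 2g_\star$) is a clean way to state the second half of the claim, but to fully justify the ``as the model converges'' clause you would need to compare the $O(k\delta)$ reward contrast against a noise/regularization floor, as the paper does, rather than comparing the gradient at the target against $g_\star$.
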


\begin{proof}
Let $\delta > 0$ represent a small residual error in the late stage of training. The gradient magnitude at $e_i = \delta$ is
\begin{equation*}
\bigl|\nabla_{e_i} \sigma(k, \delta)\bigr|
= \frac{2k \exp(k\delta)}{(1 + \exp(k\delta))^2}.
\end{equation*}
For $k\delta \ll 1$, a Taylor expansion $\exp(k\delta) = 1 + k\delta + O((k\delta)^2)$ gives
\begin{equation*}
\bigl|\nabla_{e_i} \sigma(k, \delta)\bigr|
= \frac{2k (1 + k\delta + O((k\delta)^2))}{(2 + k\delta + O((k\delta)^2))^2}
= \frac{k}{2} + O(k^2 \delta).
\end{equation*}
Thus, near $e_i=0$, the gradient magnitude is approximately $\frac{k}{2}$ and does not vanish as $\delta \to 0$; instead, it is controlled primarily by $k$.

However, the \emph{reward difference} between a nearly correct prediction and a perfect one is
\begin{equation*}
|\sigma(k,\delta) - \sigma(k,0)|
= \left|\frac{2}{1+\exp(k\delta)} - 1\right|
= \frac{2}{1+\exp(k\delta)} - 1,
\end{equation*}
where the last equality uses $\sigma(k,0)=1$. For $k\delta \ll 1$, we have
\begin{align}
\exp(k\delta) &= 1 + k\delta + O((k\delta)^2), \\
\frac{2}{1+\exp(k\delta)}
&= \frac{2}{2 + k\delta + O((k\delta)^2)}
= 1 - \frac{k}{2}\delta + O((k\delta)^2),
\end{align}
so that
\begin{equation*}
|\sigma(k,\delta) - \sigma(k,0)|
\approx \frac{k}{2}\,\delta \quad \text{for } k\delta \ll 1,
\end{equation*}
which is $O(k\delta)$.

For a fixed low $k$, as $\delta \to 0$ this difference becomes extremely small and can be easily overshadowed by numerical noise or by regularization terms such as the KL-divergence penalty $\beta_{\mathrm{KL}}$. In other words, with small $k$ the reward surface becomes too flat to reliably distinguish between errors of size $\delta$ and $0$ at high precision.

By increasing $k(t)$ to a larger value $k_{\max}$ in the late stage of training, both the local slope near $e_i=0$ (which scales like $k/2$) and the reward contrast between $e_i=\delta$ and $e_i=0$ (which scales like $k\delta$) are amplified by orders of magnitude. This compensates for the diminishing residual error $\delta$, ensuring that the effective gradient signal on the policy parameters $\nabla_\theta \mathcal{L}$ remains large enough to continue driving the model toward the exact optimum $e_i = 0$.
\end{proof}

This coarse-to-fine curriculum guarantees that the model is neither stalled by gradient vanishing at the beginning nor limited by insufficient gradient resolution at the end.

\section{Extended Ablation Studies and Sensitivity Analysis}
\label{appendix:ablation}

To further validate the robustness and internal mechanisms of the \textit{Smooth Operator} framework, we provide a series of controlled experiments focusing on hyperparameter sensitivity and operator design.

\subsection{Dynamics of Sharpness Schedule ($k_{min}$ and $k_{max}$)}

The SNRA operator relies on a dynamic sharpness schedule. We analyze the impact of the starting point ($k_{min}$) and the terminal bound ($k_{max}$) separately.

\paragraph{Initial Smoothness $k_{min}$.} 
As shown in Table \ref{tab:kmin_sens}, the framework is highly robust to $k_{min}$ within the $(0, 1]$ range. However, performance degrades when $k_{min} > 1$, as a prematurely sharp reward surface creates a "cold start" problem, penalizing early-stage exploration.

\begin{table}[h]
\centering
\caption{Sensitivity analysis of initial $k_{min}$ on VSI-Bench accuracy.}
\label{tab:kmin_sens}
\begin{tabular}{lccccc}
\toprule
\textbf{Hyperparameter} & \multicolumn{5}{c}{\textbf{Varying Initial $k_{min}$}} \\ 
\cmidrule(lr){2-6}
Value & 0.1 & 0.5 & \textbf{1.0 (Default)} & 2.0 & 5.0 \\ 
\midrule
VSI-Bench Acc. (\%) & 59.8 & 60.1 & \textbf{60.0} & 57.2 & 54.5 \\ 
Training Status & Stable & Stable & \textbf{Stable} & Unstable & Degenerated \\
\bottomrule
\end{tabular}
\end{table}

\paragraph{Terminal Sharpness $k_{max}$.}
Table \ref{tab:kmax_sens} illustrates a parabolic performance trend. While increasing $k$ enhances precision, an excessively large $k_{max}$ (e.g., 400) leads to reward sparsity and high-variance gradients. $k_{max} \approx 100$ serves as the optimal "sweet spot" for fine-grained 3D reasoning.

\begin{table}[h]
\centering
\caption{Sensitivity analysis of peak sharpness $k_{max}$ on VSI-Bench accuracy.}
\label{tab:kmax_sens}
\begin{tabular}{lccccc}
\toprule
\textbf{Hyperparameter} & \multicolumn{5}{c}{\textbf{Varying $k_{max}$}} \\ 
\cmidrule(lr){2-6}
Value & 25 & 50 & \textbf{100 (Default)} & 200 & 400 \\ 
\midrule
VSI-Bench Acc. (\%) & 54.8 & 57.6 & \textbf{60.0} & 59.1 & 57.9 \\ 
$\Delta$ Performance & -5.2 & -2.4 & - & -0.9 & -2.1 \\
\bottomrule
\end{tabular}
\end{table}

\subsection{Sensitivity of Absolute Scaling Factor ($\alpha$)}

We investigate the impact of the integer-scaled factor $\alpha$ in AP-GRPO. Since $R_i \in [0, 1]$, the absolute reward component $R_i^\alpha$ diminishes quadratically when $\alpha=2$. This leads to \textit{advantage vanishing}, where the gradients become too small to guide the model beyond the SFT baseline (see Table \ref{tab:alpha_sens}).

\begin{table}[h]
\centering
\caption{Impact of scaling factor $\alpha$ on training stability.}
\label{tab:alpha_sens}
\begin{tabular}{lccc}
\toprule
\textbf{Scaling Factor} & $\alpha = 0$ (GRPO) & $\alpha = 1$ (AP-GRPO) & $\alpha = 2$ \\
\midrule
Final Accuracy (\%) & 54.4 & \textbf{60.0} & 49.7 \\
Training Status & Stable & \textbf{Optimal} & \textbf{Vanishing} \\
Gradient Norm & Normal & Normal & $\approx 0$ \\
\bottomrule
\end{tabular}
\end{table}

\subsection{Ablation of Smooth Numerical Reward Activation}We compare the Sigmoid-based operator with a Tanh-based implementation to ensure the framework is not dependent on a specific functional form. To align the Tanh operator with the SNRA requirement---where the reward reaches its maximum of 1 \textbf{only when the error is zero} ($e_i=0$) and decays to 0 as the error increases---we employ a normalized Tanh formulation:\begin{equation}r_i = 1 - \tanh(k \cdot e_i), \quad \text{where } e_i \in [0, \infty], r_i \in [0, 1]\end{equation}In this formulation, when $e_i = 0$, $r_i = 1 - \tanh(0) = 1$. As $e_i \to \infty$, $r_i$ asymptotically approaches $0$ (since $\tanh(\infty) = 1$). This ensures that the Tanh-based operator maintains the same $[0, 1]$ range and monotonic decay as the Sigmoid version. Experimental results (Table \ref{tab:act_ablation}) show that the choice between these smooth operators has a negligible effect on final accuracy.

\begin{table}[h]
\centering
\caption{Performance comparison of Sigmoid and Shifted-Tanh operators.}
\label{tab:act_ablation}
\begin{tabular}{lc}
\toprule
\textbf{Operator Type} & \textbf{Accuracy (\%)} \\
\midrule
Sigmoid (Ours) & \textbf{59.97} \\
Tanh (Shifted) & 59.89 \\
\bottomrule
\end{tabular}
\end{table}

The marginal 0.08\% difference confirms that the smooth-to-binary transition logic is the key driver of the performance gains, rather than the specific mathematical choice of the activation function.

```latex

\section{Discrete Verification Scores and the Scaling Function $\Phi$}
\label{sec:phi_discrete_detailed}

This section details how we construct (i) task-specific verifier scores $V_\tau(o)\in[0,1]$ that provide partial credit for discrete relational subtasks, and (ii) a scaling function $\Phi_\tau$ that maps these scores into a unified non-negative error space compatible with SNRA.

\subsection{From Discrete Scores to a Unified Error}
\label{subsec:disc_to_error}

For a discrete subtask $\tau\in\mathcal{T}_{\mathrm{disc}}$, the verifier produces a graded score $V_\tau(o)\in[0,1]$ for a model output $o$.
We convert this score into a \emph{logical miss}:
\begin{equation}
u \triangleq 1 - V_\tau(o)\in[0,1],
\end{equation}
and map it into an error scalar:
\begin{equation}
e_{\mathrm{disc}} = \Phi_\tau(u)\in[0,+\infty).
\end{equation}
The mapping is designed to satisfy:
\begin{equation}
\Phi_\tau(0)=0,\quad \lim_{u\to 1}\Phi_\tau(u)=+\infty,\quad \Phi_\tau\ \text{is monotone increasing}.
\end{equation}
The first condition ensures that a fully correct discrete prediction ($V_\tau=1$) corresponds to zero error; the divergence at $u\to 1$ ensures that complete failures ($V_\tau\to 0$) can be mapped to arbitrarily large penalties; the monotonicity ensures that higher verification scores always yield smaller errors.

We define a unified error space $E=[0,\infty)$. In practice, we employ numerical stabilizers (e.g., $\epsilon$-clipping for logs and $e_{\max}$ for invalid parses) which may upper-bound realized error values while preserving non-negativity and the intended ordering.

\subsection{Log-Scaled $\Phi_\tau$ and Calibration}
\label{subsec:phi_form_calib}

We adopt a log-based unbounded scaling:
\begin{equation}
\label{eq:phi_log_detailed}
\Phi_\tau(u)=\eta_\tau\left[-\log\left(1-u+\epsilon\right)\right]^{\gamma_\tau},
\qquad u\in[0,1),
\end{equation}
where $\eta_\tau>0$ is a scale factor, $\gamma_\tau>0$ controls curvature, and $\epsilon\in(0,1)$ is a numerical stabilizer.
Since $-\log(1-u+\epsilon)\ge 0$ and grows without bound as $u\to 1$ (for small $\epsilon$), $\Phi_\tau$ maps $u\in[0,1)$ to $[0,+\infty)$, meeting the unified error-domain requirement in theory.

In implementation, we clip $u$ to avoid evaluating $\log(0)$:
\begin{equation}
u \leftarrow \min\left(u,\,1-\epsilon\right),
\end{equation}
which yields a finite but very large error for exact failures, while preserving numerical stability.

To ensure that (near-)failures fall into the high-penalty region of SNRA at the end of training, we calibrate $\eta_\tau$ using the terminal sharpness $k_{\max}$ and a target near-zero reward level $\epsilon_r\in[10^{-3},10^{-2}]$.
Recall SNRA:
\begin{equation}
\hat{\sigma}(k,e)=\frac{2}{1+\exp(ke)}.
\end{equation}
We first compute the target error $e^\star$ that corresponds to reward $\epsilon_r$ under $k_{\max}$:
\begin{equation}
\label{eq:estar_from_epsr}
\hat{\sigma}(k_{\max},e^\star)=\epsilon_r
\;\Rightarrow\;
e^\star=\frac{1}{k_{\max}}\ln\left(\frac{2}{\epsilon_r}-1\right).
\end{equation}
Let $V_{\min}$ denote the score floor treated as ``complete failure'' (typically $V_{\min}=0$), so that $u_{\mathrm{fail}}=1-V_{\min}$.
With the clipped implementation, this becomes:
\begin{equation}
\label{eq:u_fail_clip}
u_{\mathrm{fail}}=\min\left(1-V_{\min},\,1-\epsilon\right)=1-\epsilon.
\end{equation}
We calibrate $\eta_\tau$ by enforcing $\Phi_\tau(u_{\mathrm{fail}})=e^\star$:
\begin{equation}
\label{eq:eta_calib}
\eta_\tau
=
\frac{e^\star}
{\left[-\log\left(1-u_{\mathrm{fail}}+\epsilon\right)\right]^{\gamma_\tau}}
=
\frac{e^\star}
{\left[-\log\left(2\epsilon\right)\right]^{\gamma_\tau}}.
\end{equation}
This guarantees that a score at the failure floor is mapped to an SNRA reward approximately $\epsilon_r$ when $k(t)$ reaches $k_{\max}$.

Unless stated otherwise, we share a single $\eta$ across all discrete tasks to keep the reward scale consistent, and tune $\gamma_\tau$ mildly across task types (typically $\gamma_\tau\in[1,2]$).

\subsection{Verifier Designs with Partial Credit}
\label{subsec:verifier_design_detailed}

We now specify $V_\tau(o)$ for four discrete subtasks: Relative Direction, Appearance Order, Counting, and Position Relationship.
Each verifier is deterministic and checkable, while providing partial credit for near-miss predictions.

\subsubsection{Relative Direction}
\label{subsubsec:verifier_direction}

The goal is to predict the direction of object $A$ relative to $B$ (e.g., 4-way or 8-way direction bins).
Let the ground-truth class be $c$ and the predicted class be $\hat{c}$, both mapped to indices $i,j\in\{0,\ldots,K-1\}$, where $K\in\{4,8\}$.
We define the circular distance on the direction ring:
\begin{equation}
d(c,\hat{c}) = \min(|i-j|,\,K-|i-j|).
\end{equation}
We then assign a graded score:
\begin{equation}
\label{eq:V_dir_detailed}
V_{\mathrm{dir}}(o)=
\begin{cases}
1, & d(c,\hat{c})=0,\\
\eta_{\mathrm{dir}}, & d(c,\hat{c})=1,\\
0, & d(c,\hat{c})\ge 2,
\end{cases}
\end{equation}
where $\eta_{\mathrm{dir}}\in(0,1)$ is a fixed near-miss credit (we use $\eta_{\mathrm{dir}}=0.5$ by default).
This scheme gives partial credit to predictions that are off by one adjacent direction bin, while assigning zero credit to larger angular mistakes.

When continuous geometry is available (optional), we can compute the ground-truth relative bearing angle $\theta$ and the predicted bearing $\hat{\theta}$, and use a smooth angular score:
\begin{equation}
\Delta \triangleq \mathrm{wrap}(|\theta-\hat{\theta}|)\in[0,\pi],\qquad
V_{\mathrm{dir}}(o)=\exp\left(-\frac{\Delta^2}{2\sigma_\theta^2}\right),
\end{equation}
where $\sigma_\theta$ controls angular tolerance, yielding fully continuous partial credits.

\subsubsection{Appearance Order}
\label{subsubsec:verifier_order}

This task evaluates whether the model correctly predicts the appearance/visibility order of objects.
We support both pairwise and listwise formulations.

For a pairwise query on objects $A$ and $B$, the verifier extracts their first-visible indices (or timestamps) $t_A$ and $t_B$ from the annotated sequence.
Let the predicted relation be $\widehat{\mathrm{ord}}\in\{A\prec B,\,B\prec A\}$.
The ground-truth relation is induced by $\mathrm{sign}(t_A-t_B)$.
We define:
\begin{equation}
\label{eq:V_order_pair_detailed}
V_{\mathrm{ord}}(o)=
\mathbb{I}\!\left[\widehat{\mathrm{ord}}=\mathrm{sign}(t_B-t_A)\right]
\cdot
\left(1-\exp\left(-\frac{|t_A-t_B|}{\beta}\right)\right),
\end{equation}
where $\beta>0$ sets a margin scale.
The term $1-\exp(-|t_A-t_B|/\beta)$ down-weights nearly-tied cases, reflecting that these are intrinsically ambiguous even when the predicted order matches.

For listwise ordering with $n$ objects, the verifier compares the predicted permutation $\hat{\pi}$ against the ground-truth permutation $\pi$ using the Kendall inversion count $K(\pi,\hat{\pi})$.
The normalized score is:
\begin{equation}
\label{eq:V_order_list_detailed}
V_{\mathrm{ord}}(o)=1-\frac{K(\pi,\hat{\pi})}{\binom{n}{2}},
\end{equation}
which lies in $[0,1]$ and provides partial credit proportional to how many pairwise order constraints are satisfied.

\subsubsection{Counting}
\label{subsubsec:verifier_count}

The counting task requires predicting an integer $\hat{n}$ for a ground-truth count $n$.
We parse $\hat{n}$ from the model output and define the absolute deviation:
\begin{equation}
\Delta \triangleq |\hat{n}-n|.
\end{equation}
We use an exponentially decaying correctness score:
\begin{equation}
\label{eq:V_count_detailed}
V_{\mathrm{cnt}}(o)=\exp\left(-\frac{\Delta}{\tau}\right),
\end{equation}
where $\tau>0$ controls tolerance.
With $\tau=1$, an off-by-one prediction receives $e^{-1}$ partial credit, and larger errors decay smoothly, avoiding binary sparsity.

When count ranges vary significantly across tasks (optional), we can use a normalized linear score:
\begin{equation}
V_{\mathrm{cnt}}(o)=\max\left(0,\,1-\frac{|\hat{n}-n|}{\max(n,1)+c}\right),
\end{equation}
where $c>0$ prevents instability when $n$ is small.

\subsubsection{Position Relationship}
\label{subsubsec:verifier_position}

This task evaluates spatial relations between objects (e.g., \texttt{left-of}, \texttt{in-front-of}, \texttt{inside}, \texttt{overlap}, \texttt{near}).
Since multiple relations can simultaneously hold, we treat this as a multi-label verification problem whenever applicable.

Let $S$ be the ground-truth set of relations and $\hat{S}$ be the predicted set extracted from the output.
We use Jaccard similarity as the verifier score:
\begin{equation}
\label{eq:V_pos_jaccard_detailed}
V_{\mathrm{pos}}(o)=\frac{|\hat{S}\cap S|}{|\hat{S}\cup S|}.
\end{equation}
This yields partial credit for predicting a correct subset, penalizes missing relations, and also penalizes spurious relations.

If a dataset defines mutually exclusive relation labels, we can optionally define a relation graph with shortest-path distance $d(c,\hat{c})$ between the predicted and ground-truth labels, and use:
\begin{equation}
V_{\mathrm{pos}}(o)=1-\frac{d(c,\hat{c})}{d_{\max}},
\end{equation}
where $d_{\max}$ is the maximum graph distance. This assigns partial credit when the predicted relation is semantically close to the correct one.

\subsection{Final Discrete Error for SNRA}
\label{subsec:final_disc_error_detailed}

Given $V_\tau(o)$ from the task-specific verifiers above, we compute $u=1-V_\tau(o)$ and apply the log-scaled mapping:
\begin{equation}
\label{eq:disc_error_final_log}
e_{\mathrm{disc}} = \Phi_\tau(1-V_\tau(o))
=
\eta_\tau\left[-\log\left(V_\tau(o)+\epsilon\right)\right]^{\gamma_\tau}.
\end{equation}
This mapping is non-negative, monotone decreasing in $V_\tau$, and unbounded as $V_\tau(o)\to 0$, hence $e_{\mathrm{disc}}\in[0,+\infty)$ in theory.

The resulting $e_{\mathrm{disc}}$ is fed into SNRA to produce the smooth numerical reward for discrete tasks:
\begin{equation}
\tilde{r}=\hat{\sigma}(k(t), e_{\mathrm{disc}}).
\end{equation}
With the calibration in Eq.~\eqref{eq:estar_from_epsr}--\eqref{eq:eta_calib}, fully incorrect (or near-zero) discrete predictions receive near-zero rewards at late training (high $k$), while partially correct predictions remain within gradient-sensitive regions, improving sample utilization and stabilizing AP-GRPO updates.

\section{Case Visualization}

\begin{figure}
    \centering
    \includegraphics[width=1\linewidth]{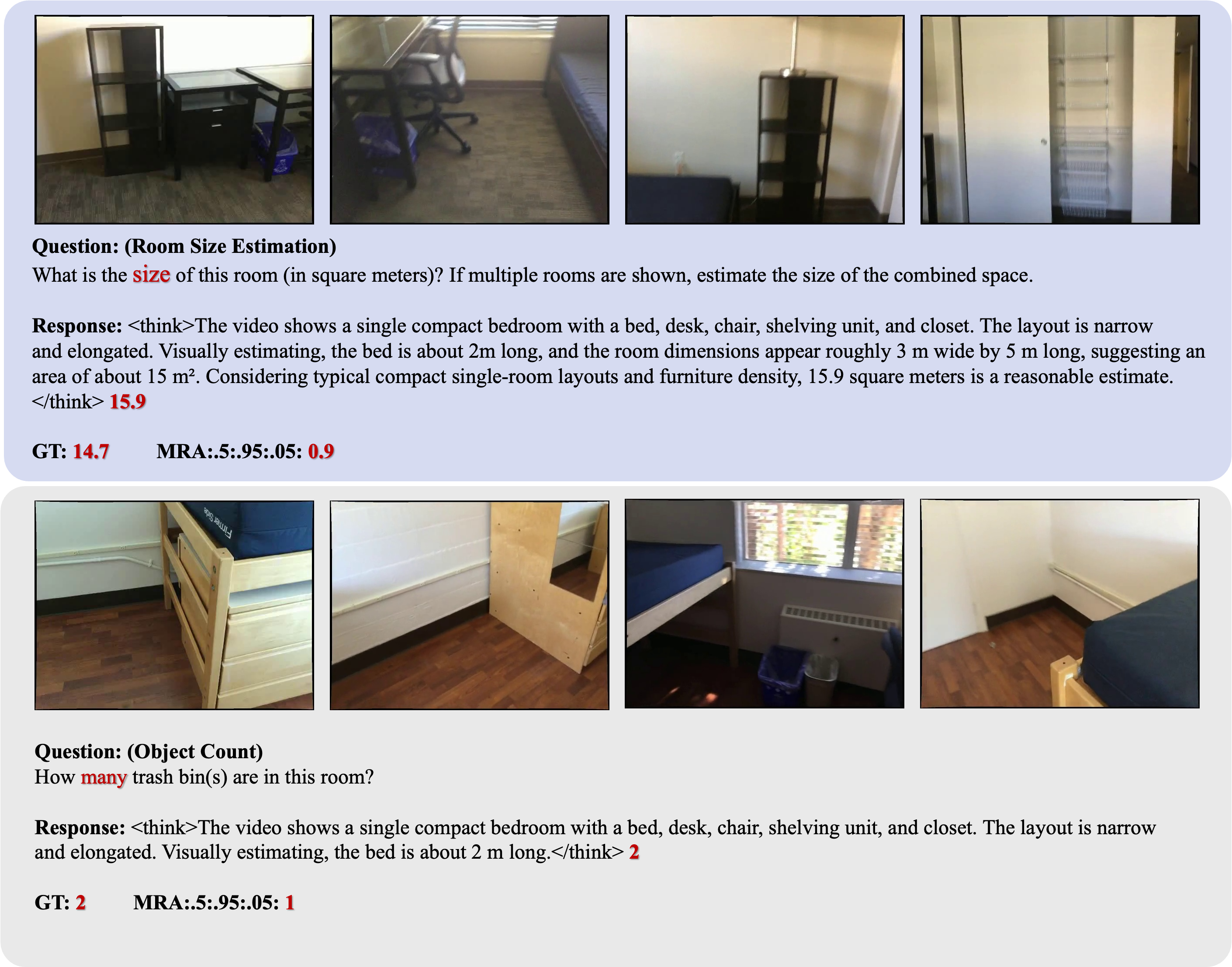}
    \caption{Cases of VSI-Bench}
    \label{fig:placeholder}
\end{figure}

\begin{figure}
    \centering
    \includegraphics[width=1\linewidth]{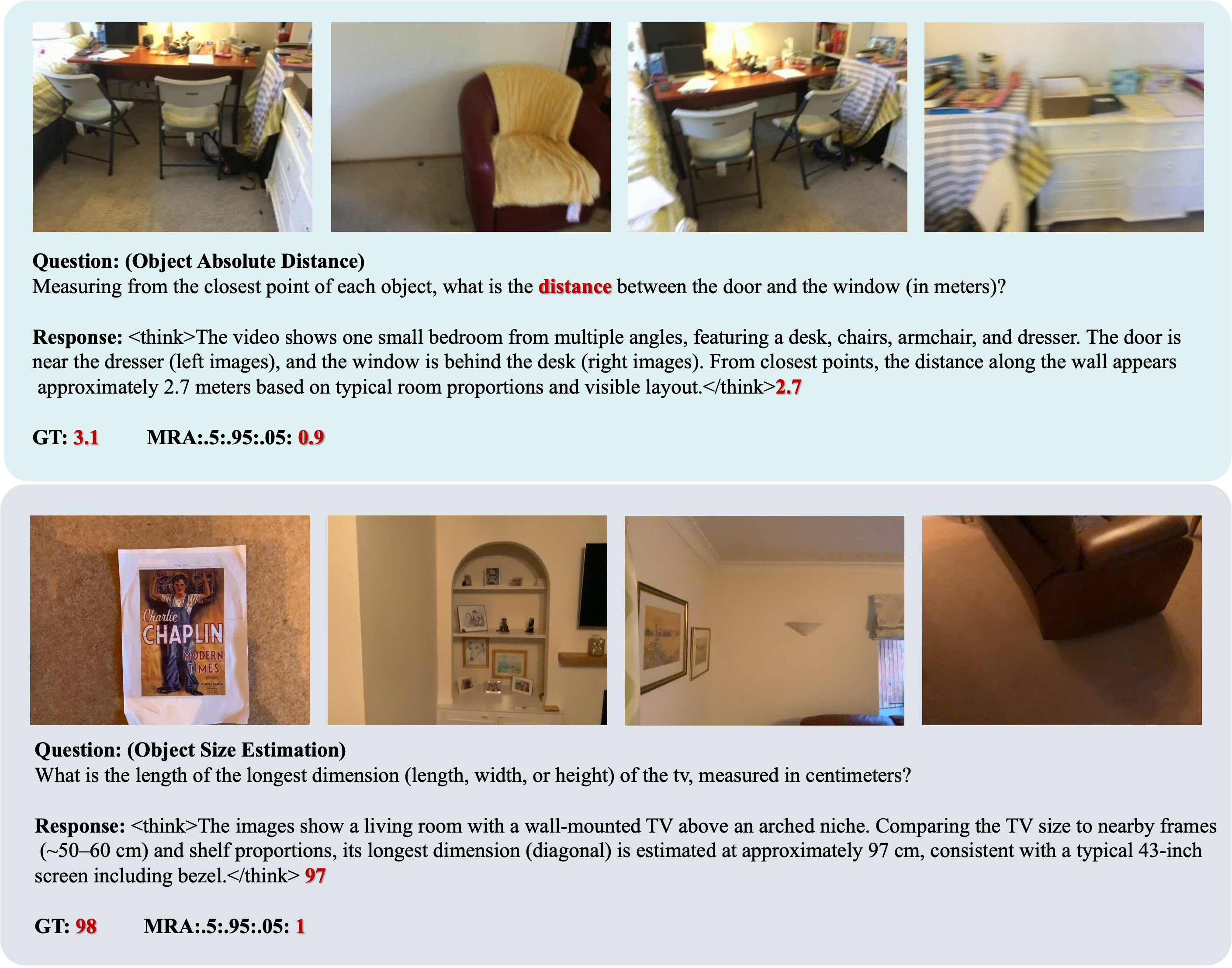}
    \caption{Cases of VSI-Bench}
    \label{fig:placeholder}
\end{figure}

\end{document}